\providecommand{\algorithmname}{Algorithm}
\providecommand{\definitionname}{Definition}
\providecommand{\factname}{Fact}
\providecommand{\theoremname}{Theorem}
\theoremstyle{plain}
\newtheorem{fact}{\protect\factname}
\newtheorem{thm}{\protect\theoremname}
\newtheorem{cor}{Corollary}
\theoremstyle{definition}
\newtheorem{defn}{\protect\definitionname}
\providecommand{\defas}     {\mathrel{\triangleq}}
\providecommand{\funcName}[1]{\textsc{#1}}
\newcommand{\E}{\mathbf{E}}
\providecommand{\reals}{\mathbb {R} }
\newcommand{\set}[1]{\mathcal{#1}}
\providecommand{\indexMarkup}[1]{(#1)}
 \providecommand{\argsA}[2]{ {#1}_{#2} } 
 \providecommand{\argsAI}[3]{ {#1}_{#2}^{\indexMarkup{#3}} }
\providecommand{\sect}{Section~}
\providecommand{\fig}{Figure~}
\providecommand{\rthm}{Theorem~}
\providecommand{\AGS}{D}
\providecommand{\agentS}{\set{\AGS}}
\providecommand{\agentI}[1]{{#1}}
\providecommand{\nrA}{n} 
\providecommand{\AC}{s}                 
\providecommand{\ACS}{\set{S}}                     
\providecommand{\aA}[1]{\argsA{\AC}{#1}}
\providecommand{\aAS}[1]{\argsA{\ACS}{#1}}      
\providecommand{\aAI}[2]{\argsAI{\AC}{#1}{#2}}
\providecommand{\utF}{u}		
\providecommand{\utA}[1]{\argsA{\utF}{#1}}	
\setlist{topsep=2mm,partopsep=0mm,parsep=1mm,itemsep=.1ex,leftmargin=5mm}
\title{Beyond Local Nash Equilibria \\ for Adversarial Networks\footnote{%
This version supersedes our earlier arXiv paper~\citep{arXiv1}.}}
\author{
    Frans A. Oliehoek \and
    Rahul Savani \and
    Jose Gallego \and
    Elise van der Pol \and
    Roderich Gro{\ss}}
\begin{document}

\maketitle

\begin{abstract}
Save for some special cases, current training methods for 
Generative Adversarial Networks (GANs) are at best guaranteed to 
converge to a ‘local Nash equilibrium’ (LNE).
Such LNEs, however, can be arbitrarily far from an actual Nash equilibrium (NE), which implies that there are no guarantees on the quality of the found generator or classifier.
This paper proposes to  model GANs explicitly as finite games in mixed
strategies, thereby ensuring that every LNE is an NE. With this formulation, we
propose a solution method that is proven to monotonically converge to a
\emph{resource-bounded Nash equilibrium (RB-NE)}: by increasing computational
resources we can find better solutions. We empirically demonstrate that our
method is less prone to typical GAN problems such as mode collapse, and produces
solutions that are less exploitable than those produced by GANs and MGANs, and 
closely resemble theoretical predictions about NEs. 
\end{abstract}

\thispagestyle{empty}



\section*{}

\global\long\def\nrA{n}

\global\long\def\aA#1{\argsA{\AC}{#1}}

\global\long\def\utA#1{\argsA{\utF}{#1}}

\global\long\def\mA#1{\argsA{\mu}{#1}}

\global\long\def\aAI#1#2{\argsAI{\AC}{#1}{#2}}

\global\long\def\aAS#1{\argsA{\ACS}{#1}}

\global\long\def\gameval{v^{*}}

\global\long\def\mf{\phi}
 %

\global\long\def\paramG{\theta_{G}}
 %

\global\long\def\paramC{\theta_{C}}
 %

\global\long\def\paramGS{\Theta_{G}}
 %

\global\long\def\paramCS{\Theta_{C}}
 %

\global\long\def\dim{d}
 %


\section{Introduction}

Generative Adversarial Networks (GANs) \citep{Goodfellow14NIPS27}
are a framework in which two neural networks compete with each other:
the \emph{generator (G) }tries to trick the \emph{classifier (C)}
into classifying its generated fake data as true. GANs hold great
promise for the development of accurate generative models for
complex distributions. 
Consequently, in just a few years, GANs have grown
into a major topic of research in machine learning. A core appeal
is that they do not need to rely on distance metrics~\citep{Li16SI}.
However, GANs are difficult to train 
\citep{Unterthiner18ICLR,ArjovskyB17,ACB17_WGAN}. A typical
problem is \emph{mode collapse}, which can take the form of 
\emph{mode omission}, where the generator does not produce 
points from certain modes, or \emph{mode degeneration}, in which
for at least one mode the generator only partially covers the mode.
Moreover, while learning the players
may \emph{forget}: e.g., it is possible that a
classifier correctly learns to classify part of the input space as
`fake' only to forget this later when the generator no longer generates
examples in this part of space. 
In fact, except for very special cases (cf. \sect\ref{sec:relatedWork}), the
best current training methods can offer
\citep{Heusel17NIPS,Unterthiner18ICLR} is a guarantee to converge to a  \emph{local} Nash
equilibrium (LNE)~\citep{Ratliff13ACCCC}. However, an LNE can be arbitrarily far from an NE 
(which we will also refer to as `global NE' to discriminate)
and the corresponding generator might be exploitable by a strong opponent due
to suffering from problems such as mode collapse.
Moreover, adding computational resources alone may not offer a way to escape
these local equilibria: the problem does not lie in the lack of computational
resources, but is inherently the result of only allowing small steps in
strategy space using gradient-based training.

We introduce a novel approach that does not suffer from getting trapped in LNEs:
finite \emph{Generative Adversarial Network Games (GANGs)} formulate adversarial networks
as \emph{finite} zero-sum games, and the solutions that
we try to find are saddle points in \emph{mixed strategies}. This approach is
motivated by the observation that, considering a GAN as a finite zero-sum game,
in the space of mixed strategies, any local Nash equilibrium is a global one.
Intuitively, the reason for this is that whenever there is a profitable pure
strategy deviation one can move towards it in the space of mixed strategies.

Since we cannot expect to find exact best responses due to the extremely large
number of pure strategies that result for sensible choices of neural network
classes, we introduce resource-bounded best-responses (RBBRs), and the
corresponding resource-bounded Nash equilibrium (RB-NE), which is a pair of
mixed strategies in which no player can find a better RBBR. This is richer than
the notion of local Nash equilibrium in that it captures not only failures of
escaping local optima of gradient descent, but applies to \emph{any} approximate
best response computations, including methods with random restarts, and allows
us to provide convergence guarantees. 

The key features of our approach are that:
\begin{itemize}[leftmargin=3mm]
\item It is based on 
        finite zero-sum games, and as such it
        enables the use of existing game-theoretic methods. In this paper we 
		focus on one such method, Parallel Nash Memory (PNM)~\citep{Oliehoek06GECCO}.
\item It will not get trapped in LNEs: we prove that it monotonically converges to an
    RB-NE, which means that more computation can improve solution quality.
\item Moreover, it works for any network architecture (unlike previous
        approaches, see  \sect\ref{sec:relatedWork}). In particular, future improvements in
        classifiers/generator networks can be exploited directly.
\end{itemize}



We investigate empirically the effectiveness of PNM and show that it can indeed
deal well with typical GAN problems such as mode collapse and forgetting, 
especially in distributions with asymmetric structure to
their modes. 
%
We show that the found solutions are much less susceptible to being exploited by
an adversary, and that they more
closely match the theoretical predictions made by \citet{Goodfellow14NIPS27}
about the conditions at a Nash equilibrium.


\section{Background }

\label{sec:background}  
We defer a more detailed treatment of related work on GANs and recent game-theoretic approaches until Section~\ref{sec:relatedWork}.
Here, we introduce some basic game-theoretic notation.
\begin{defn}[`game']
    A \emph{two-player strategic game}
, which we will
    simply call `game', 
    is a tuple $\left\langle 
    \agentS,
    \left\{ \aAS i\right\} _{i\in\agentS},
    \left\{ \utA i\right\} _{i\in\agentS}
    \right\rangle $,
where 
    $\agentS=\{\agentI1,\agentI2\}$ 
    is the set of \emph{players}, 
	$\aAS i$ is the set of \emph{pure strategies} (actions) for player $i$, 
    and $\utA i:\mathcal{S}\to\mathbb{R}$ is $i's$ payoff function
    defined on the set of pure strategy profiles 
    $\mathcal{{S}}:=\aAS 1\times\aAS 2$. 
    When the 
    action sets are finite, the game is \emph{finite}.
\end{defn}
We also write $\aA i$ and $\aA{-i}$ for the strategy  of agent $i$ and its
opponent respectively. 

A fundamental concept is the Nash equilibrium (NE), which
is a strategy profile such that no player can unilaterally deviate
and improve his payoff. 
\begin{defn}[Pure Nash equilibrium]
 A pure strategy profile 
    $\aA{} = \left\langle \aA{i},\aA{-i} \right\rangle $
is an NE if and only if
    $\utA i( \aA{} ) \geq \utA i(\left\langle \aA i', \aA{-i}\right\rangle )$
for all players $i$ and $\aA i'\in\aAS i$.
\end{defn}
A finite game may not possess a pure NE. A \emph{mixed strategy} $\mA i$ of
player $i$ is a probability distribution over $i$'s pure strategies
$\aAS i$. The set of such probability distributions is denoted by $\Delta(\aAS i)$. The payoff of a player under a profile of mixed strategies
$\mA{} = \left\langle \mA{1},\mA{2}\right\rangle $ 
is defined as the expectation: 
$
    \utA i(\mA{}):=
                \sum_{\aA{}\in\aAS{}} 
                [ \prod_{j\in \agentS}\mA{j}(\aA{j}) ] \cdot \utA{i}(\aA{}).
$                

Then an NE in mixed strategies is defined as follows. 
\begin{defn}[Mixed Nash equilibrium]
    A 
    $\mA{} = \left\langle \mA i, \mA{-i} \right\rangle $
    is an NE if and only if
    $\utA i(\mA{})
     \geq
     \utA i(\left\langle \aA i',\mA{-i}\right\rangle )$
for all players $i$ and potential unilateral deviations $\aA i'\in\aAS i$.
\end{defn}
Every finite game has at least one NE in mixed strategies~\citep{Nash50}.
In this paper we deal with two-player 
\emph{zero-sum} games, where
$\utA 1(\aA 1,\aA 2)=-\utA 2(\aA 1,\aA 2)$ for all $\aA 1\in\aAS 1,\aA 2\in\aAS 2$.
The equilibria of zero-sum games, also called \emph{saddle points},\footnote{
    \label{foot:saddlepoint}
    Note that in game theory the term `saddle point' is used to denote a `global' saddle point which corresponds to a Nash equilibrium: there is no profitable deviation near or far away from the current point. In contrast, in machine learning, the term 'saddle point' typically denotes a `local' saddle point: no player can improve its payoff by making a small step from the current joint strategy.}
have several important properties,
as stated in the Minmax theorem.
\begin{thm}[\citet{vNM28}]
\label{thm:minmax}In a finite zero-sum game, 
$
    \min_{\mA{2}}
    \max_{\mA{1}}
    u_{1}(\mA{})
    =
    \max_{\mA{1}}
    \min_{\mA{2}}
    u_{1}(\mA{})
    =\gameval,
$
where $\gameval$ is  the \emph{value} of the game. 
\end{thm}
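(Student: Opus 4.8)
The plan is to prove the two inequalities separately. The ``easy'' direction, $\max_{\mA 1}\min_{\mA 2}\utA 1(\mA{}) \le \min_{\mA 2}\max_{\mA 1}\utA 1(\mA{})$, holds for \emph{any} two-player game and requires no zero-sum assumption: for every pair $(\mA 1,\mA 2)\in\Delta(\aAS 1)\times\Delta(\aAS 2)$ we have $\min_{\mA 2'}\utA 1(\mA 1,\mA 2') \le \utA 1(\mA{}) \le \max_{\mA 1'}\utA 1(\mA 1',\mA 2)$; since the outer-left term does not depend on $\mA 2$ and the outer-right term does not depend on $\mA 1$, taking $\max$ over $\mA 1$ on the left and then $\min$ over $\mA 2$ on the right yields the bound. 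Continuity of the bilinear payoff $\utA 1$ together with compactness of the simplices guarantees all these extrema are attained, so everything is well-defined.

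For the reverse inequality I would invoke the existence of a mixed Nash equilibrium, already quoted above (Nash). Let $\langle \mA 1^*,\mA 2^*\rangle$ be such an equilibrium. Because the game is zero-sum, $\utA 2 = -\utA 1$, so the equilibrium condition for player~$2$ becomes $\utA 1(\mA 1^*,\mA 2^*) \le \utA 1(\mA 1^*,\mA 2')$ for all $\mA 2'$, hence $\utA 1(\mA 1^*,\mA 2^*) = \min_{\mA 2'}\utA 1(\mA 1^*,\mA 2') \le \max_{\mA 1}\min_{\mA 2'}\utA 1(\mA 1,\mA 2')$. Symmetrically, the equilibrium condition for player~$1$ gives $\utA 1(\mA 1^*,\mA 2^*) = \max_{\mA 1'}\utA 1(\mA 1',\mA 2^*) \ge \min_{\mA 2}\max_{\mA 1'}\utA 1(\mA 1',\mA 2)$. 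Chaining the two lines produces $\min_{\mA 2}\max_{\mA 1}\utA 1 \le \max_{\mA 1}\min_{\mA 2}\utA 1$, which together with the easy direction forces equality, and the common value is $\gameval := \utA 1(\mA 1^*,\mA 2^*)$.

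The step that needs the most care is turning the equilibrium's \emph{pure}-deviation guarantee (as stated in the definition) into a statement about all \emph{mixed} deviations, i.e. the identity $\max_{\mA 1}\utA 1(\mA 1,\mA 2^*) = \max_{\aA 1'\in\aAS 1}\utA 1(\aA 1',\mA 2^*)$. This is precisely the observation that an affine function on the simplex $\Delta(\aAS 1)$ attains its maximum at a vertex, since $\utA 1(\cdot,\mA 2^*)$ is linear in $\mA 1$; the same remark underlies the intuition that a profitable pure deviation can always be followed in mixed-strategy space. If one preferred a proof that does not lean on Nash's theorem, the standard alternatives are to write $\max_{\mA 1}\min_{\mA 2}\utA 1$ as a linear program and $\min_{\mA 2}\max_{\mA 1}\utA 1$ as its dual and apply strong LP duality, or to run the classical separating-hyperplane argument on the set of achievable payoff vectors; the LP route has the side benefit of yielding an algorithm for computing $\gameval$.
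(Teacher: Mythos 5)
The paper itself gives no proof of this theorem: it is stated as a classical result and attributed to von Neumann, so there is nothing internal to compare against. Your argument is correct and complete as a self-contained derivation. The ``easy'' inequality $\max_{\mA 1}\min_{\mA 2}\utA 1(\mA{})\le\min_{\mA 2}\max_{\mA 1}\utA 1(\mA{})$ is handled exactly as it should be (it is a fact about any real-valued function of two variables, with compactness of the simplices and bilinearity of $\utA 1$ guaranteeing attainment), and the reverse inequality via existence of a mixed Nash equilibrium is legitimate here because the paper explicitly quotes Nash's existence theorem just before this point. You were also right to flag the one step that a careless reader would skip: the paper's definition of a mixed NE only quantifies over \emph{pure} deviations $\aA i'\in\aAS i$, and upgrading that to all mixed deviations needs the observation that $\utA 1(\cdot,\mA 2^{*})$ is affine on $\Delta(\aAS 1)$ and so is maximized at a vertex --- which is the same observation the paper uses informally when arguing that every LNE of a finite game is an NE. The only caveat worth recording is that this route is logically heavier than the theorem requires: Nash's theorem rests on a fixed-point argument, whereas the finite zero-sum case follows from strong LP duality (or von Neumann's original separating-hyperplane argument), which is both more elementary and, as you note, constructive --- the LP formulation is in fact what the paper's \funcName{SolveGame} subroutine uses to compute $\gameval$ on the subgames.
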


All equilibria have payoff $\gameval$
and equilibrium strategies are interchangeable:
if
$ \langle \mA{1}, \mA{2} \rangle $
and
$ \langle \mA{1}',\mA{2}' \rangle $
are equilibria, then so are
$ \langle \mA{1}',\mA{2} \rangle $
and
$ \langle \mA{1} ,\mA{2}' \rangle $ \citep{Osborne+Rubinstein94}.
This means that in zero-sum games we do not need to worry about equilibrium selection: any equilibrium strategy for a player is guaranteed to achieve the value of the game.
Moreover, the convex combination of two equilibria is an equilibrium, 
meaning that  the game has either one or infinitely many equilibria.

We also
employ
the standard, additive notion of \emph{approximate equilibrium:}
\begin{defn}
    A pair of (possibly pure) strategies $(\mA i,\mA{-i})$ is an $\epsilon$-NE if
    $
    \forall i \quad
    \utA i(\mA i,\mA{-i})
    \geq
    \max_{\mA i'}
    \utA i(\mA i',\mA{-i})-\epsilon.
    $
In other words, no player can gain more than $\epsilon$ by deviating.
\end{defn}

In the literature, GANs have not typically been considered as finite
games. The natural interpretation of the standard setup of GANs is
of an infinite game where payoffs are defined over all possible weight
parameters for the respective neural networks. With this view we do
not obtain existence of saddle points in the space of parameters%
\footnote{
    Note that \citet{Goodfellow14NIPS27}'s theoretical results on the existance
    of an equilibrium are ``done in a non-parametric setting'' which 
    studies ``convergence in the space of
    probability density functions'', not parameters.
}, 
nor the desirable properties of Theorem \ref{thm:minmax}.
Some results on the existence of saddle
points in infinite action games are known, but they require properties
like convexity and concavity of utility functions \citep{Aubin98optima}, which we cannot
apply as they would need to hold w.r.t. the neural network parameters.
This is why the notion of  \emph{local Nash equilibrium (LNE)} has
arisen in the literature \citep{Ratliff13ACCCC,Unterthiner18ICLR}.
Roughly, an LNE is a strategy profile where neither player can improve
in a small neighborhood of the profile.\footnote{So this corresponds to a `local saddle point'. Cf.~footnote~\ref{foot:saddlepoint}.
}
In finite games
every LNE is an NE, as, whenever there is a global deviation (i.e., a better response), one
can always deviate locally in the space of mixed strategies 
towards a pure best response (by playing that better response with $\epsilon$ higher probability).


\section{GANGs}

\label{sec:GANGs}

In order to capitalize on the insight that we can escape local equilibria by
switching to mixed strategy space for a finite game, we formalize adversarial
networks in a (finite) games setting, that we call (finite) \emph{Generative
Adversarial Network Games (GANGs)}.\footnote{
    Since all GANs are in practice implemented in finite precision floating point
    systems, we limit ourselves to the mathematically simpler setting of finite games.
    However, we point out that at least one convergent method 
    (fictitious play)
    for continuous zero-sum games exists
    \citep{Danskin1981}. We conjecture that it is possible to show such convergence also
    for the Parallel Nash Memory method that we will employ in this paper.
}



We start with making explicit how general (infinite) GANs correspond to strategic games, via the GANG framework:

\begin{defn}[GANG]
\label{def:A-GANG-is}
A \emph{GANG} is a tuple 
$\mathcal{M} = \left\langle p_{d},\left\langle G,p_{z}\right\rangle ,C,\mf\right\rangle $
with 
\begin{itemize}[leftmargin=3mm]
\item $p_{d}(x)$ is the distribution over (`true' or `real') data points
$x\in\mathbb{R}^{\dim}$.
\item $G$ is a neural network class 
    parametrized by a
    parameter vector $\paramG\in\paramGS$
and $d$ outputs, such that $G(z;\paramG)\in\mathbb{R}^{\dim}$ denotes
the (`fake' or `generated') output of $G$ on a random vector $z$
drawn from some distribution $z\sim p_{z}$. 
\item $C$ is a neural network class 
    parametrized by a
    parameter vector $\paramC\in\paramCS$
and a single output, such that the output $C(x;\paramC)\in[0,1]$
indicates the `realness' of $x$ according to $C$. 
\item ${\mf:[0,1]\to\reals}$ is a \emph{measuring function} \citep{Arora17ICML}---e.g., $\log$ for GANs, the identity mapping for WGANs---%
used to specify game payoffs, explained next.
\end{itemize}
\end{defn}
A GANG induces a zero-sum game in an intuitive way:
\begin{defn}
\label{def:The-induced-zero-sum}The induced zero-sum strategic-form
game of a GANG is $\left\langle \agentS=\{G,C\},\left\{ \aAS G,\aAS C\right\} ,\left\{ \utA G,\utA C\right\} \right\rangle $
with:
\begin{itemize}[leftmargin=3mm]
\item 
    $\aAS G=\left\{ G(\cdot;\paramG)\mid\paramG\in\paramGS\right\} $
    the set of strategies
    $\aA G$;
\item 
    $\aAS C=\left\{ C(\cdot;\paramC)\mid\paramC\in\paramCS\right\} $,
    the set of strategies
    $\aA C$;
\item 
    $\utA C(\aA G,\aA C) =
        \E_{x\sim p_{d}}
        [\mf(\aA C(x))]
        -
        \E_{z\sim p_{z}}
        [\phi(\aA C(\aA G(z)))].$
I.e., the score of $C$ is the expected `measured realness'
of the real data minus that of the fake data;
\item 
    $\utA G(\aA G,\aA C)=-\utA C(\aA G,\aA C)$.
\end{itemize}
\end{defn}
As such, when using $\mf = \log $, GANGs employ a payoff function for $G$ that
use \citet{Goodfellow14NIPS27}'s trick to enforce strong gradients early in the
training process, but it applies this transformation to  $\utA C$ too, in order
to retain the zero-sum property. The correctness of these transformations are
proven in Appendix~\ref{app:zerosum}.

In practice, GANs are represented using floating point numbers, of which, for 
a given setup, there 
is only a finite (albeit large) number. In GANGs, we formalize this:
\begin{defn}
Any GANG where $G,C$ are finite classes---i.e.,
classes of networks constructed from a finite set of node types (e.g.,
\{Sigmoid, ReLu, Linear\})---and with
architectures of bounded size, is called a \emph{finite network
class GANG.} 
A finite network class GANG in which the sets 
$\paramGS,\paramCS$
are finite too, it is called a \emph{finite GANG. }
\end{defn}
From now on, we will focus on finite GANGs.
We emphasize this finiteness, because this is exactly what enables us to obtain
the desirable properties mentioned in \sect\ref{sec:background}:
existence of (one or infinitely many) mixed NEs with the same value, as well
as the guarantee that any LNE is an NE.
Moreover, these properties hold for 
the GANG in its original formulation---%
not for a theoretical abstraction in terms of (infinite capacity) densities---%
which means that we can truly expect solution methods (that operate in the
parametric domain) to exploit these properties.
However, since we do not impose any additional constraints or
discretization%
\footnote{
Therefore, our finite GANGs have the same representational
capacity as normal GANs that are implemented using floating point arithmetic.
When strategies would be highly non-continuous functions of  $\paramG$ and
$\paramC$, then a discretization (e.g., a floating point system) might be
inadequate, but then GANs are not appropriate either:  neural networks are continuous
functions.
}%
, the number of strategies (all possible unique instantiations of
the network class with floating point numbers) is \emph{huge}. 
Therefore, we think that finding (near-) equilibria with
small supports is one of the most important challenges for making
principled advances in  the field of adversarial networks.
As a first step towards addressing this challenge, we propose to make use of the
\emph{Parallel Nash Memory (PNM)}~\citep{Oliehoek06GECCO}, which can be seen as a generalization 
(to non-exact best responses) of the \emph{double oracle method}~\citep{McMahan03ICML}.
%


\section{Resource-Bounded GANGs}

\label{sec:RB-GANGs}

While finite GANGs do not suffer from LNEs, solving them is non-trivial: even
though we know that there always is a direction (i.e., a best response) that 
we can play more frequently to move towards an NE, computing that direction
itself is a computationally intractable task. There is huge number of candidate
strategies and the best-response payoff is a non-linear (or convex) function of
the chosen strategy.
This means that finding an NE, or even an $\epsilon-$NE
will be typically beyond our capabilities.
Therefore we consider players with bounded computational resources. 

\paragraph{Resource-Bounded Best-Responses (RBBR).} 
A Nash equilibrium is defined by the absence of better responses for any of the
players. As such, best response computation is a critical tool to verify whether
a strategy profile is an NE, and is also a common subroutine in algorithms that
compute an NE.%
\footnote{
    We 
    motivate the need for an alternative solution concept.
    We acknowledge the fact that there are solution methods 
    that do not require the explicit computation for best responses, but are not aware
    of any such methods that would be scalable to the huge strategy spaces
    of finite GANGs.
}
However, 
since  
computing an
($\epsilon$-)best response will generally be intractable for GANGs, we examine
the type of solutions that we actually can expect to
compute with bounded computational power by the notion of
\emph{resource-bounded best response} and show how it naturally leads to 
the \emph{resource-bounded NE (RB-NE)} solution concept.

We say that $\aAS i^{RB}\subseteq\aAS i$ is the subset of strategies
of player $i$, that $i$ can compute as a best response, given
its bounded computational resources.
This computable set is an abstract formulation that can capture 
gradient descent being trapped in local optima, 
as well as other 
phenomena that prevent us from computing a best response 
(e.g., not even reaching a local optimum in the available time). 
\begin{defn}
    \label{def:RBBR}
A strategy $s_{i}\in\aAS i^{RB}$ of player~$i$ is a \emph{resource-bounded
best-response} (RBBR) against a (possibly mixed) strategy  $\mA j$, if $\forall\aA i'\in\aAS i^{RB},~~\utA i(\aA i,\mA j)\geq\utA i(\aA i',\mA j)$.
\end{defn}
That is, $\aA i$ only needs to be amongst the best strategies
that player~$i$ \emph{can compute} in response to $\mA j$. We denote
the set of such RBBRs to $\mA j$ by
$\aAS i^{RBBR(\mA j)}\subseteq\aAS i^{RB}$. 
\begin{defn}
A \emph{resource-bounded best-response function}\textbf{ $f_{i}^{RBBR}:\Delta(\aAS j)\to\aAS i^{RB}$
}is a function that maps from the set of possible strategies of player $j$ to an RBBR for~$i$, s.t.\ $\forall_{\mA j}\; f_{i}^{RBBR}(\mA j)\in\aAS i^{RBBR(\mA j)}$.
\end{defn}

Using RBBRs, we  define an intuitive specialization of NE:

\begin{defn}
    $\mA{}=\langle \mA i,\mA j \rangle $ is a \emph{resource-bounded 
    NE (RB-NE)} 
iff 
$\forall i\quad\utA i(\mA i,\mA j)\geq\utA i(f_{i}^{RBBR}(\mA j),\mA j).$
\end{defn}
That is, an RB-NE can be thought of as follows:
we present $\mA{}$ to each player $i$ and it gets the chance to
switch to another strategy, for which it can apply its bounded resources 
(i.e., use $f_{i}^{RBBR}$) exactly once. After this application,
the player's resources are exhausted and if the found $f_{i}^{RBBR}(\mA
j)$ does not lead to a higher payoff it will not have an incentive to
deviate.\footnote{
Of course, during training the RBBR functions will be used many times, and
therefore the overall computations that \emph{we, the designers (`GANG
trainers'),} use is a multitude of the computational capabilities that that the
RB-NE assumes. However, this is of no concern: the goal of the RB-NE is to provide a characterization of the end
point of training. In particular, it allows us to give an abstract
formalization of the idea that more computational resources imply a better
solution.}

Clearly, an RB-NE can be linked to the familiar notion of $\epsilon$-NE
by making assumptions on the power of the best response computation.

\begin{thm}
\label{thm:epsNashGivenSufficientResources}
If both players are powerful enough to compute $\epsilon$-best
responses, then an RB-NE is an $\epsilon$-NE.
\begin{proof}
    \sloppypar
Starting from the RB-NE $(\mA i,\mA j)$, assume an arbitrary~$i$.
By definition of RB-NE $\utA i(\mA i,\mA j)\geq \utA i(f_{i}^{RBBR}(\mA j),\mA j)\geq\max_{\mA i'}\utA i(\mA i',\mA j)-\epsilon$.\end{proof}
\end{thm}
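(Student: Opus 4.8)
The plan is to chain together exactly two inequalities: the one that defines an RB-NE and the one that defines an $\eps$-best response. First I would fix an RB-NE $\mA{}=\langle \mA i,\mA j\rangle$ and consider an arbitrary player $i$. By the definition of RB-NE, player $i$ cannot improve by switching to the strategy returned by its own resource-bounded best-response function, so $\utA i(\mA i,\mA j)\geq \utA i(f_i^{RBBR}(\mA j),\mA j)$.

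Next I would invoke the hypothesis that player $i$ is ``powerful enough to compute $\eps$-best responses.'' The formal content of this assumption is that $f_i^{RBBR}(\mA j)$, although a priori only guaranteed to be best within the computable set $\aAS i^{RB}$, is in fact an $\eps$-best response against $\mA j$ over the \emph{entire} strategy set, i.e.\ $\utA i(f_i^{RBBR}(\mA j),\mA j)\geq \max_{\aA i'}\utA i(\aA i',\mA j)-\eps$. Combining this with the previous inequality yields $\utA i(\mA i,\mA j)\geq \max_{\aA i'}\utA i(\aA i',\mA j)-\eps$; since $i$ was arbitrary, the inequality holds for both players, which is precisely the definition of an $\eps$-NE.

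The argument is essentially a two-line definition chase, so there is no genuine obstacle. The only point needing a moment of care is the quantifier over deviations: the definition of $\eps$-NE is phrased in terms of (pure or mixed) unilateral deviations, but by linearity of $\utA i$ in player $i$'s own mixed strategy we have $\sup_{\mA i'}\utA i(\mA i',\mA j)=\max_{\aA i'}\utA i(\aA i',\mA j)$, so the pure- and mixed-deviation formulations coincide and nothing extra is required. A secondary thing I would state explicitly is that the $\eps$-best-response guarantee must apply against the specific (possibly mixed) opponent strategy $\mA j$ appearing in the RB-NE — this is exactly what ``powerful enough to compute $\eps$-best responses'' is taken to mean here — so that the second inequality in the chain is licensed.
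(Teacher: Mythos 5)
Your proposal is correct and follows essentially the same argument as the paper: chaining the RB-NE inequality $\utA i(\mA i,\mA j)\geq \utA i(f_{i}^{RBBR}(\mA j),\mA j)$ with the $\epsilon$-best-response guarantee to obtain $\utA i(\mA i,\mA j)\geq\max_{\mA i'}\utA i(\mA i',\mA j)-\epsilon$ for each player. Your added remark that pure and mixed deviations coincide by linearity is a harmless clarification the paper leaves implicit.
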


\paragraph{Non-deterministic Best Responses. }The above definitions 
assumed deterministic 
RBBR functions
\textbf{$f_{i}^{RBBR}$}. However, in many cases the RBBR function can be
non-deterministic (e.g., due to random restarts),
which means that the sets $\aAS i^{RB}$ are non-deterministic.
This is not a fundamental problem, however, and the same approach can
be adapted to allow for such non-determinism. 

In particular, now let
$f_{i}^{RBBR}$ be a non-deterministic function, and \emph{define}
$\aAS i^{RB}$ as the 
range of this function.
That is, we define $\aAS i^{RB}$ as that set of strategies that
our non-deterministic RBBR function might deliver. Given this modification
the definition of the RB-NE remains unchanged:
a strategy profile $\mA{}=\langle \mA i,\mA j \rangle $ is a
\emph{non-deterministic RB-NE} 
if 
each player $i$ uses all its computational resources by calling 
$f_{i}^{RBBR}(\mA j)$ once, 
and no player finds a better strategy to switch to.

\section{Solving GANGs}

\label{sec:SolvingGANGs}

Treating GANGs as finite games in mixed strategies 
permits
building on existing tools and algorithms
for these classes of games \citep{Fudenberg98book,RakhlinS13a,Foster16NIPS}.
In this section, we describe how to use the Parallel Nash Memory \citep{Oliehoek06GECCO},
which 
explicitly aims to prevent forgetting,
is particularly tailored to finding approximate NEs with small support, 
and monotonically%
\footnote{
    For an explanation of the precise meaning of monotonic here, we refer to \citet{Oliehoek06GECCO}.
    Roughly, we will be `secure' against more strategies of the other agent with each iteration.
    This does not imply that the worst case payoff for an agent also improves monotonicaly. 
    The latter property, while desirable, is not possible
    with an approach that incrementally constructs sub-games of the full game, as considered here: there might always be a part of the game we have not seen yet, but which we might discover in the future that will lead to a very poor worst case payoff for one of the agents.
}
converges to such an equilibrium.

In the following, we give a concise description of a slightly simplified form of
Parallel Nash Memory (PNM) and how we apply it to GANGs. %
For ease of explanation, we focus on the setting with deterministic 
best responses.\footnote{
    In our experiments, we use random initializations for the best responses.
    To deal with this non-determinism we simply discard any tests that are not able
    to find a positive payoff over the current mixed strategy NE $\left\langle \mA G,\mA C\right\rangle $,
    but we do not terminate. Instead, we run
    for a pre-specified number of iterations. 
}
 
\begin{algorithm}
\providecommand{\commentSymb}{//}
\begin{algorithmic}[1]
\small
\State{$\langle \aA{G}, \aA{C} \rangle \gets \funcName{InitialStrategies}()                                                  $}
\State{$\langle \mA{G}, \mA{C} \rangle \gets \langle \{\aA{G}\}, \{\aA{C}\} \rangle    $} \Comment{set initial mixtures}
\While{True}
    \State{$ \aA{G} \gets \funcName{RBBR}( \mA{C} )                                 $}  \Comment{get new bounded best resp.}
    \State{$ \aA{C} \gets \funcName{RBBR}( \mA{G} )                                                                                 $}
    \State \commentSymb{ Expected payoffs of these `tests' against mixture:}
    \State{$ u_{BRs} \gets \utA{G}(\aA{G},  \mA{C} ) + \utA{C}(\mA{G}, \aA{C}) $}  
    \If{$ u_{BRs} \leq 0 $}
        \State \textbf{break}
    \EndIf
        \State{$SG \gets \funcName{AugmentGame}(SG, \aA{G}, \aA{C})     $}
        \State{$\langle \mA{G}, \mA{C} \rangle \gets \funcName{SolveGame}(SG)                                                  $}
\EndWhile
\State{\textbf{return} $\langle \mA{G}, \mA{C} \rangle $} \Comment{found an BR-NE}
\end{algorithmic}

\protect\caption{\funcName{Parallel Nash Memory for GANGs with deterministic RBBRs}}
\label{alg:PNM}
\end{algorithm}

The algorithm is shown in Algorithm~\ref{alg:PNM}.
Intuitively, PNM incrementally grows a strategic game $SG$, over
a number of iterations, using the \funcName{AugmentGame} function.
It uses \funcName{SolveGame} to compute (via linear programming, see, e.g., \citet{Shoham08book}) 
a mixed strategy NE $\left\langle \mA G,\mA C\right\rangle $
\emph{of this smaller game} at the end of each iteration. 
In order to generate new candidate strategies to include in $SG$, at the beginning of each  
iteration the algorithm uses a `search' heuristic to deliver new promising strategies.
In our GANG setting, we use the resource-bounded best-response (RBBR)
functions of the players for this purpose. After having found new strategies,
we test if they `beat' the current $\left\langle \mA G,\mA C\right\rangle $, if they do, $u_{BRs} > 0$, and 
the game is augmented with these and solved again to find a new NE
of the sub-game $SG$. If they do not, $u_{BRs} \leq 0$, and the algorithm stops.\footnote{
Essentially, this is the same as the double oracle method, but it
replaces exact best-response computation with approximate RBBRs. Therefore, we refer to this 
as PNM, since that inherently allows for approximate/heuristic methods to deliver new `test' strategies~\citep{Oliehoek06GECCO}. It also enables us to develop a version for \emph{non-deterministic} RBBRs, by literally adopting the PNM algorithm.}

In order to augment the game, 
PNM evaluates (by simulation) each newly found strategy for each player
against all of the existing strategies of the other player, thus
constructing a new row and column for the maintained payoff matrix.

In order to implement the best response functions, any existing neural
network architectures can be used. 
However, we need to compute RBBRs against \emph{mixtures} of networks of the other
player.  
For $C$ this is trivial: we can simply generate a batch
of fake data from the mixture $\mA G$. 
Implementing an RBBR for $G$ against $\mA C$ is slightly more involved, as we
need to back-propagate the gradient from all the different $\aA C\in\mA C$ to
$G$. 
Intuitively, one can think of a combined network consisting of the $G$ network
with its outputs connected to every $\aA C\in\mA C$ (see
Figure~\ref{network_figure}). 
\begin{figure*}
    \center
	\includegraphics[width=0.8\textwidth]{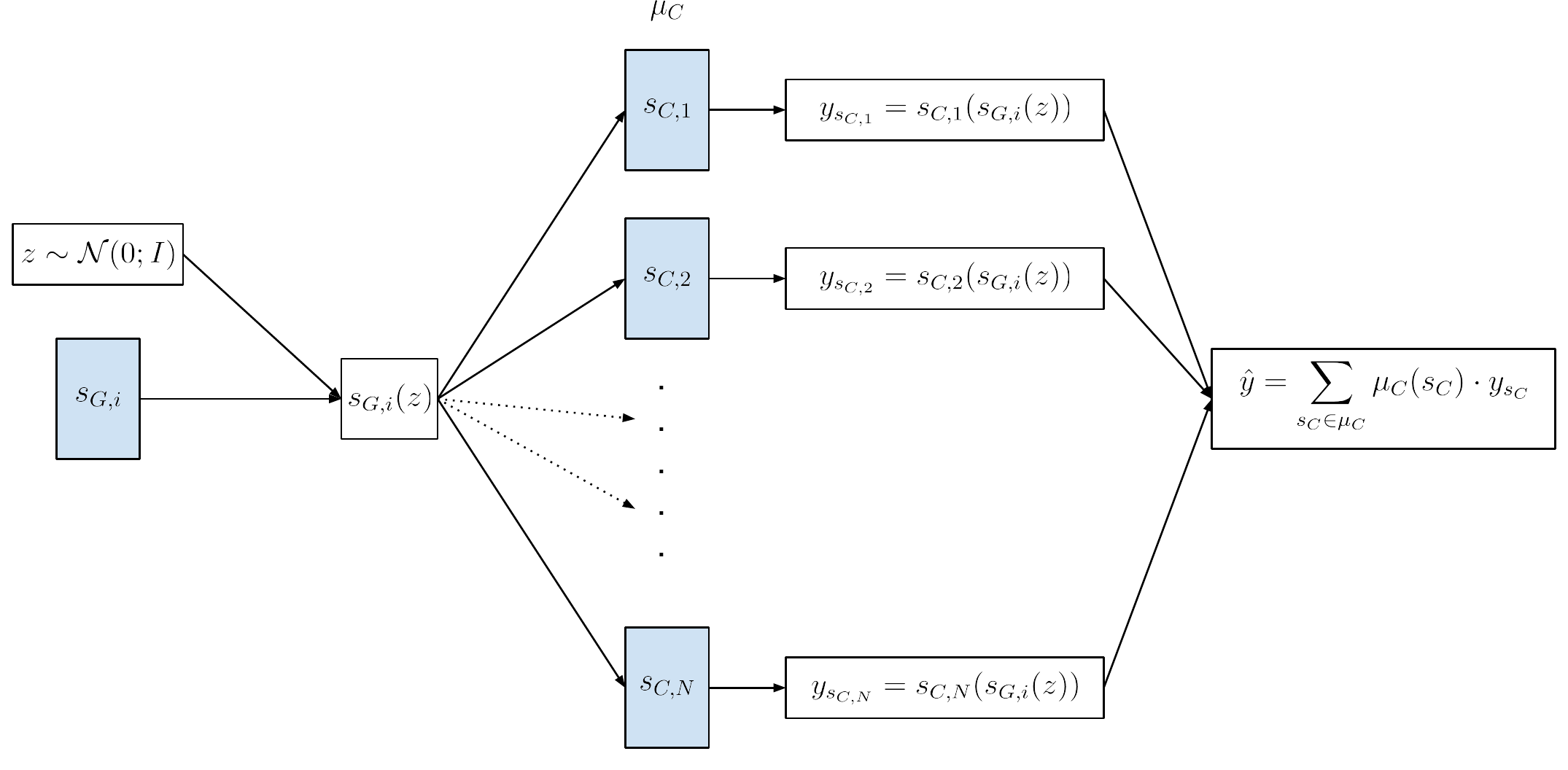}
	\caption{Network architecture that allows gradients from $\aA C\in\mA
	C$ to be back-propagated to $G$.}
	\label{network_figure}
\end{figure*}
The predictions $\hat{y}_{\aA C} $ 
of these components $\aA C\in\mA C$ are combined in a single linear output node 
$\hat{y} = 
\sum_{\aA C\in\mA C} 
\mA{C}(\aA{C}) \cdot \hat{y}_{\aA C}
$. 
This allows us to evaluate and backpropagate through the entire network. 
A practical implementation that avoids memory concerns instead loops through each
component $\aA C\in\mA C$ and does the evaluation  of the weighted prediction
$\mA{C}(\aA{C})  \cdot \hat{y}_{\aA C}$ and subsequent backpropagation per component.

Intuitively, it is clear that PNM converges to an RB-NE, which we now prove formally.
\begin{thm}
If PNM terminates, it has found an RB-NE. 
    \begin{proof}
\sloppypar
We show that $\utA{BRs}\leq0$ implies we have an RB-NE:
\begin{eqnarray}
\utA{BRs} & = & 
    \utA G(f_{G}^{RBBR}(\mA C),\mA C)   +   \utA C(\mA G,f_{C}^{RBBR}(\mA G))
    \nonumber\\
 & \leq & 0=\utA G(\mA G,\mA C)+\utA C(\mA G,\mA C)     \label{eq:UBR}
\end{eqnarray}
Note that, per Def.~\ref{def:RBBR}, 
$\utA G(f_{G}^{RBBR}(\mA C), \mA C) $   
$\geq$   
$\utA G(\aA{G}'            , \mA C) $
 for all computable 
$\aA{G}' \in  \aAS{G}^{RB}$ 
(and similar for $C$). 
Therefore, the only way that 
$\utA G(f_{G}^{RBBR}(\mA C),\mA C)$  
$\geq$
$\utA G(\mA G              ,\mA C)$ 
 could fail to hold,  is if $\mA G$ would include some strategies that are not
 computable (not in $\aAS{G}^{RB}$) that provide higher payoff. However, as the
 support of $\mA G$ is composed of 
    strategies computed in
 previous iterations, this cannot be the case.
 We conclude 
 $\utA G(f_{G}^{RBBR}(\mA C),\mA C)$  
 $\geq$    
 $\utA G(\mA G              ,\mA C)$
 and similarly
 $\utA C(\mA G, f_{C}^{RBBR}(\mA G))$  
 $\geq$     
 $\utA C(\mA G              ,\mA C)  $.
 Together with \eqref{eq:UBR} this directly implies 
 $\utA G(\mA G,\mA C) = \utA G(f_{G}^{RBBR}(\mA C),\mA C)$ and 
 $\utA C(\mA G,\mA C) = \utA C(\mA G,f_{C}^{RBBR}(\mA G))$, indicating we found
 an RB-NE.
\end{proof}
\end{thm}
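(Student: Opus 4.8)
The plan is to derive the RB-NE conditions directly from the termination test, the zero-sum structure of the induced game, and the fact that the mixtures $\mA G,\mA C$ are supported only on previously computed --- hence resource-bounded --- strategies. First I would record what termination gives: the loop exits exactly when $\utA{BRs}=\utA G(f_{G}^{RBBR}(\mA C),\mA C)+\utA C(\mA G,f_{C}^{RBBR}(\mA G))\le 0$, while the zero-sum property of the induced game (Def.~\ref{def:The-induced-zero-sum}) gives $\utA G(\mA G,\mA C)+\utA C(\mA G,\mA C)=0$. So it suffices to compare these two sums term by term.

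Second, I would establish the two ``reverse'' inequalities $\utA G(f_{G}^{RBBR}(\mA C),\mA C)\ge\utA G(\mA G,\mA C)$ and $\utA C(\mA G,f_{C}^{RBBR}(\mA G))\ge\utA C(\mA G,\mA C)$. The crucial point is that the support of $\mA G$ consists only of pure strategies produced by earlier \funcName{RBBR} calls: \funcName{AugmentGame} adds nothing else, and \funcName{SolveGame} only redistributes probability over strategies already present in $SG$. Hence every $\aA G$ in the support lies in $\aAS{G}^{RB}$, so $\utA G(\mA G,\mA C)$ is a convex combination of quantities $\utA G(\aA G,\mA C)$ with $\aA G\in\aAS{G}^{RB}$, each bounded above by $\utA G(f_{G}^{RBBR}(\mA C),\mA C)$ by Def.~\ref{def:RBBR}; averaging preserves the bound. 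The symmetric argument handles $C$.

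Finally, I would combine the pieces: each summand of $\utA{BRs}$ is at least the corresponding summand of the (zero-valued) sum $\utA G(\mA G,\mA C)+\utA C(\mA G,\mA C)$, yet $\utA{BRs}\le 0$ equals that sum, so both reverse inequalities must in fact hold with equality. Equality then yields $\utA i(\mA i,\mA j)\ge\utA i(f_{i}^{RBBR}(\mA j),\mA j)$ for both players, i.e., $\langle\mA G,\mA C\rangle$ is an RB-NE. The one step deserving care --- and the only place the argument could fail --- is the claim that $\mA G$ (and $\mA C$) never acquires a non-resource-bounded component: without this invariant, a non-computable strategy in the support could conceivably beat $f_{G}^{RBBR}(\mA C)$ and break the reverse inequality, so it should be stated explicitly that PNM maintains it by construction.
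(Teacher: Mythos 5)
Your proposal is correct and follows essentially the same route as the paper's own proof: termination plus the zero-sum identity, the observation that the supports of $\mA G,\mA C$ contain only previously computed (hence resource-bounded) strategies so the mixture payoffs are dominated by the RBBR payoffs, and then forcing equality from the two opposing inequalities. Your explicit mention of the convex-combination step and of the invariant that \funcName{AugmentGame}/\funcName{SolveGame} never introduce non-computable strategies just makes precise what the paper states more informally.
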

\begin{cor}
    Algorithm~\ref{alg:PNM} terminates and 
\emph{monotonically} converges to an equilibrium.
\begin{proof}
    This follows directly from the fact that there are only finitely many 
    RBBRs and the fact that we never forget RBBRs that we computed before, thus
    the proof for PNM \citep{Oliehoek06GECCO} extends to Algorithm~\ref{alg:PNM}.
\end{proof}
\end{cor}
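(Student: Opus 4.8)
The plan is to lean on the existing analysis of Parallel Nash Memory: Algorithm~\ref{alg:PNM} is exactly PNM with the RBBR functions playing the role of the search heuristic, so it suffices to verify that the two ingredients the PNM proof relies on are present here — (a) the maintained subgame $SG$ grows \emph{strictly} on every iteration that does not break out of the loop, and (b) only finitely many strategies can ever enter $SG$ — and then combine this with the preceding theorem (``if PNM terminates, it has found an RB-NE'').

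First I would establish (a). On an iteration that does not break we have $\utA{BRs} > 0$, i.e.\ $\utA G(f_{G}^{RBBR}(\mA C),\mA C) + \utA C(\mA G,f_{C}^{RBBR}(\mA G)) > 0$. Since $\langle \mA G,\mA C\rangle$ is an NE of the current zero-sum subgame $SG$, its payoffs satisfy $\utA G(\mA G,\mA C) + \utA C(\mA G,\mA C) = 0$, and moreover for every pure strategy $\aA G'$ of $G$ already present in $SG$ we have $\utA G(\aA G',\mA C) \leq \utA G(\mA G,\mA C)$ (and symmetrically for $C$), because $\mA C$ guarantees the value of $SG$ against all of $G$'s strategies in $SG$. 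Hence, if \emph{both} newly computed tests were already contained in $SG$, we would obtain $\utA{BRs} \leq \utA G(\mA G,\mA C) + \utA C(\mA G,\mA C) = 0$, contradicting $\utA{BRs} > 0$. So at least one of the two RBBRs is a genuinely new strategy, and \funcName{AugmentGame} adds at least one new row or column, i.e.\ $SG$ strictly grows.

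For (b) I would simply note that in a finite GANG the pure-strategy sets $\aAS G$ and $\aAS C$ are finite, so a subgame that only ever grows can be augmented only finitely often; therefore the loop exits after finitely many iterations. When it does, $\utA{BRs} \leq 0$, and the previous theorem then gives that the returned $\langle \mA G,\mA C\rangle$ is an RB-NE. Monotonic convergence is inherited from PNM: the subgames form a nested increasing sequence, so each player becomes secure against a growing set of opponent strategies as iterations proceed, which is precisely the monotonicity guarantee proved by \citet{Oliehoek06GECCO} — with the caveat, already noted in the text, that this is monotonicity of the set of strategies guarded against, not of the worst-case payoff itself.

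The only real obstacle is step (a): one must argue carefully that $\utA{BRs} > 0$ forces \emph{actual} progress — that a strictly improving RBBR cannot be a strategy $SG$ already contains — since without this the loop could in principle cycle. Once that is pinned down, termination is just finiteness, and everything else is a citation to the PNM analysis together with the preceding theorem.
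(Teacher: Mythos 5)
Your proof is correct and follows essentially the same route as the paper, which simply cites the finiteness of the set of RBBRs, the fact that computed RBBRs are never forgotten, and the original PNM analysis of \citet{Oliehoek06GECCO}. The one thing you add is a careful spelling-out of the step the paper delegates to that citation --- that $\utA{BRs}>0$ forces at least one genuinely new strategy into $SG$ because an equilibrium strategy of the zero-sum subgame already secures the subgame value against every strategy it contains --- which is exactly the argument implicit in the paper's appeal to the PNM proof.
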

The PNM algorithm for GANGs is parameter free, but we mention 
two adaptations that are helpful.

\paragraph{Interleaved training of best responses.}
In order to speed up convergence of PNM, it is possible to train best responses
of $G$ and $C$ in parallel, giving $G$ access to the intermediate results of
$f_C^{RBBR}$.  This reduces the number of needed PNM iterations, but does not
seem to affect the quality of the found solutions, as demonstrated in 
Appendix \ref{app:interleaved}. 
The results shown in the main paper do not employ this trick.

\paragraph{Regularization of classifier best responses.}
In initial experiments, we found that the best-responses by $C$ tended to
overfit---see the supplement for an extensive analysis. In order to regularize
best responses for $C$, in each iteration, in the experiments
below we sample additional data points $x$ uniformly from a bounding box enclosing the current data (both real and fake), and
include these data points as additional fake data for training $C$. The number of 
`uniform fake points' chosen coincides with the batch size used for training. 

\section{Experiments}
\label{sec:experiments}
Here we report on experiments that aim to test if searching in mixed strategies
with PNM-GANG can help in reducing problems with training GANs,
and if the found solutions (near-RB-NEs) provide better generative models
and are potentially closer to 
true Nash equilibria than those found by GANs (near-LNEs).
Since our goal is to produce better generative models, we refrain from
evaluating these methods on complex data like images: image quality and log
likelihood are not aligned as for instance shown by \citet{Theis16ICLR}.
Moreover there is debate about whether GANs are overfitting (memorizing the
data) and assessing this from samples is difficult; only crude methods have been proposed 
e.g., \citep{AroraZ17,SalimansGZCRCC16,Odena17ICML,Karras18ICLR}, most of which provide merely a
measure of variability, not over-fitting.
As such, we choose to focus on irrefutable results on mixture of Gaussian tasks, 
for which the distributions can readily be visualized, and which themselves have 
many applications.
%

\paragraph{Experimental setup.} 
We compare our PNM approach (`PNM-GANG') to a vanilla GAN implementation and state-of-the-art MGAN \citep{Hoang18ICLR}. Detailed training
and architecture settings are summarized in the appendix.

The mixture components comprise grids and annuli with equal-variance
components, as well as non-symmetric cases with randomly located modes and with
a random covariance matrix for each mode.  For each domain we create test cases
with 9 and 16 components. In our plots, black points are real data, green
points are generated data.  Blue indicates areas that are classified as
`realistic' while red indicates a `fake' classification by~$C$.

\begin{figure*}[p]
\centering
\begin{tabular}{ccc}


\subfloat{\includegraphics[trim={1cm 0.5cm 0.5cm 1cm},clip, scale = 0.36]{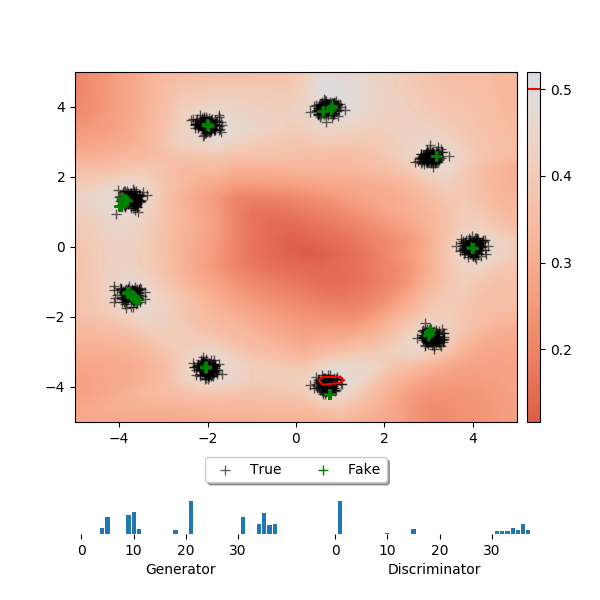}} &
\subfloat{\includegraphics[trim={1cm 0.5cm 0.5cm 1cm},clip, scale = 0.36]{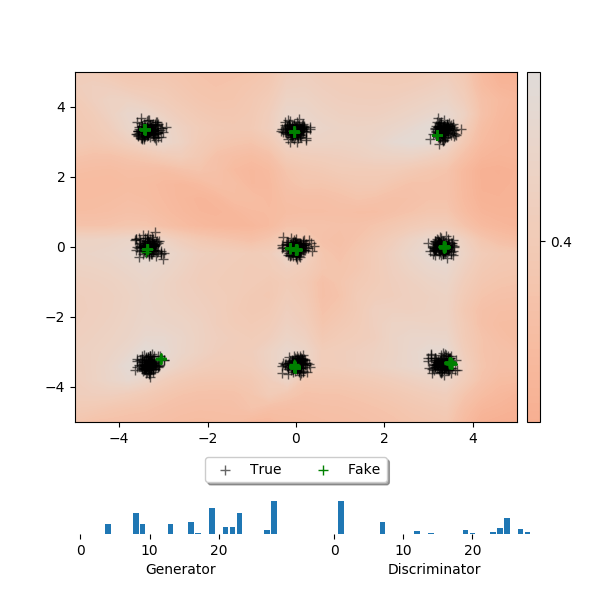}} &
\subfloat{\includegraphics[trim={1cm 0.5cm 0.5cm 1cm},clip, scale = 0.36]{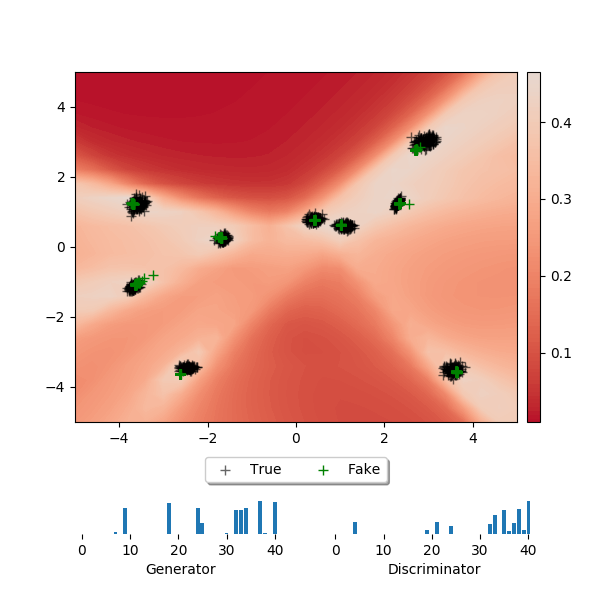}} \\


\subfloat{\includegraphics[trim={1cm 2.8cm 0.5cm 1cm},clip, scale = 0.36]{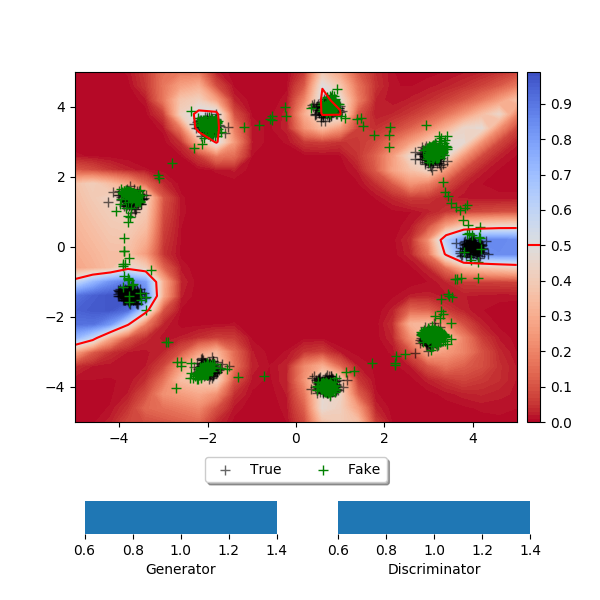}} &
\subfloat{\includegraphics[trim={1cm 2.8cm 0.5cm 1cm},clip, scale = 0.36]{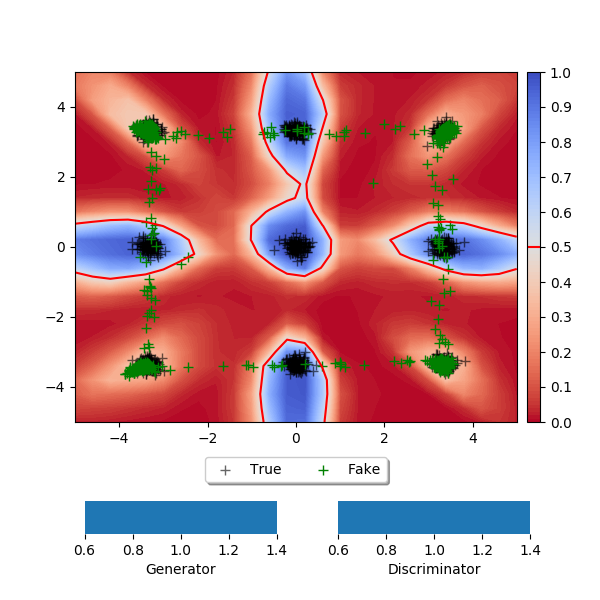}} &
\subfloat{\includegraphics[trim={1cm 2.8cm 0.5cm 1cm},clip, scale = 0.36]{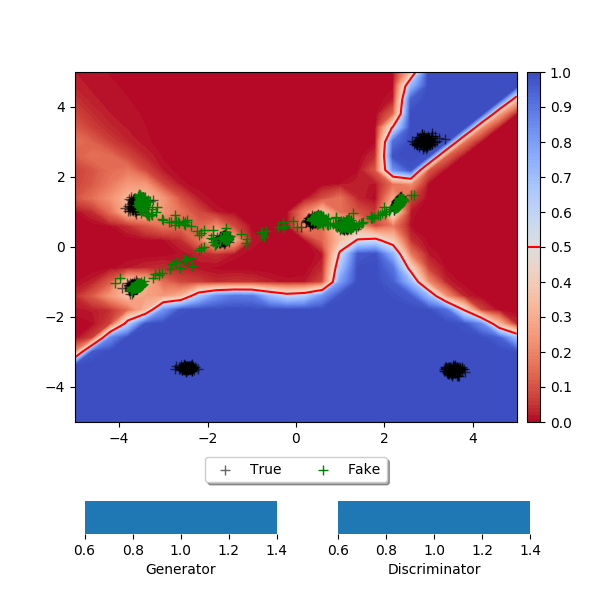}} \\


\subfloat{\includegraphics[trim={1cm 0.5cm 0.5cm 1cm},clip, scale = 0.36]{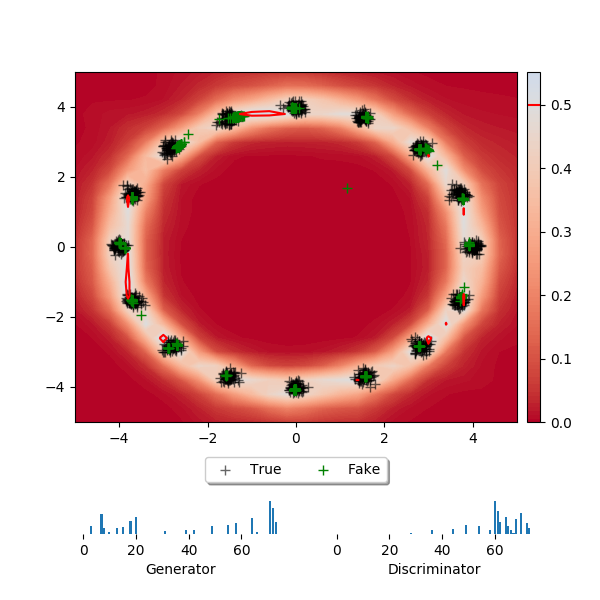}} &
\subfloat{\includegraphics[trim={1cm 0.5cm 0.5cm 1cm},clip, scale = 0.36]{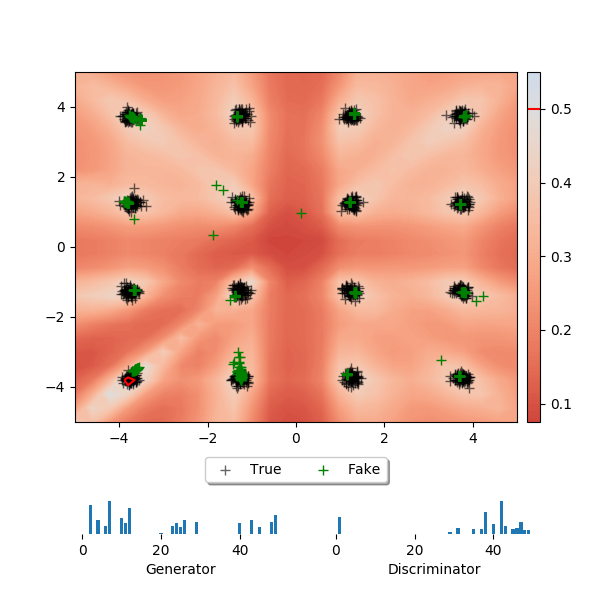}} &
\subfloat{\includegraphics[trim={1cm 0.5cm 0.5cm 1cm},clip, scale = 0.36]{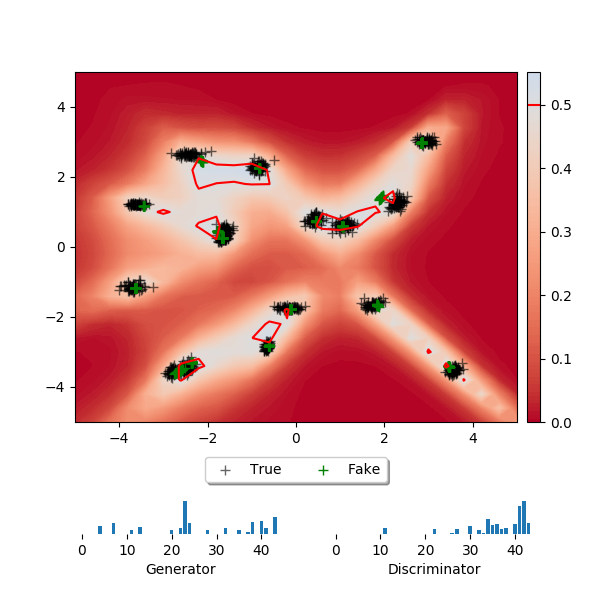}}\\

\subfloat{\includegraphics[trim={1cm 2.8cm 0.5cm 1cm},clip, scale = 0.36]{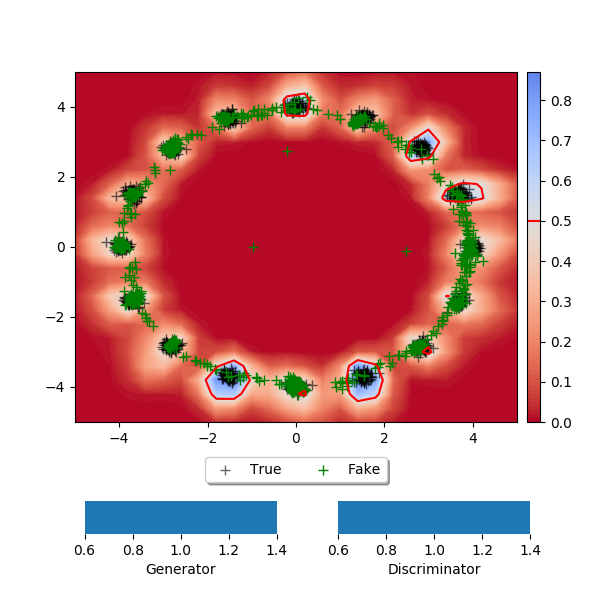}} &
\subfloat{\includegraphics[trim={1cm 2.8cm 0.5cm 1cm},clip, scale = 0.36]{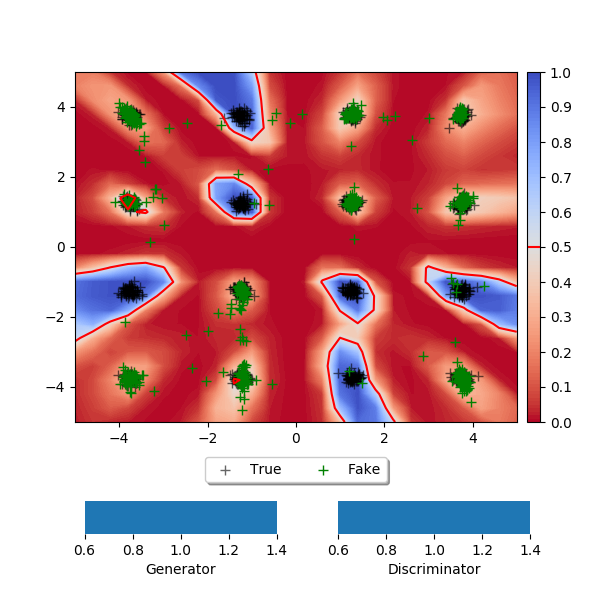}} &
\subfloat{\includegraphics[trim={1cm 2.8cm 0.5cm 1cm},clip, scale = 0.36]{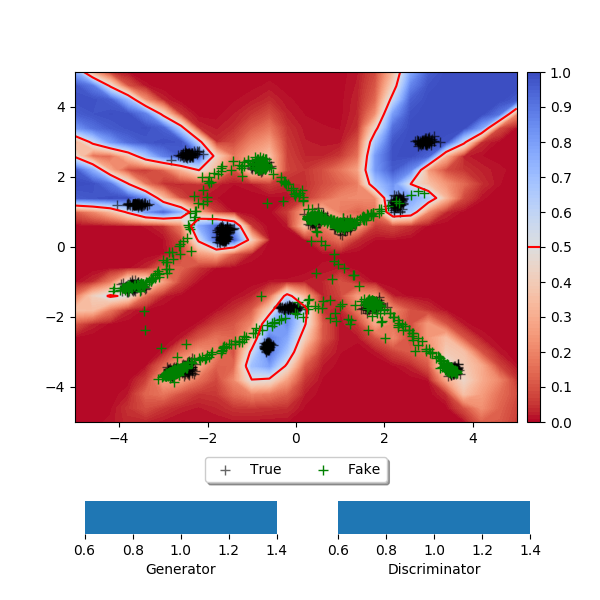}}

\end{tabular}

    \caption{Results for mixtures of Gaussians with 9 and 16 modes. Odd rows: PNM-GANG, Even rows: GAN. The histograms represent the probabilities in the mixed strategy of each player. True data is shown in black, while fake data is green. The classification boundary (where the classifier outputs 0.5) is indicated with a red line. Best seen in color.}
\label{fig:newresults}
\end{figure*}


\paragraph{Found solutions.} 
The results produced by regular GANs and PNM-GANGs are shown in
Figure~\ref{fig:newresults} and clearly convey three main points:
\begin{enumerate}
    \item
The PNM-GANG mixed classifier
has a much flatter surface than the classifier found by the GAN. Around the true
data, the GANG classifier outputs around 0.5 indicating indifference,
        which is {\em in line with the theoretical predictions} about the equilibrium
\citep{Goodfellow14NIPS27}. 
    \item
We see that
this flatter surface is not coming at the cost of inaccurate samples.
In contrast: nearly all samples shown are hitting one of the modes
        and thus {\em the PNM-GANG solutions are highly accurate}, 
        much more so than the GANs' solutions.
    \item
Finally, 
the PNM-GANGs, unlike GANs, 
        {\em do not suffer from mode omission}; they leave out no modes. 
We also note that PNM-GANG typically achieved these results with
fewer total parameters than the regular GAN, e.g., 1463 vs.\ 7653 for 
the random~9 task in Figure~\ref{fig:newresults}.
\end{enumerate}

This shows that, qualitatively, the use of multiple generators seems to lead to good results. 
This is corroborated by results we obtained using MGANs 
(illustrations and complete description can be found in Appendix \ref{sec:app:exploitability-results}), which only failed to represent one mode in the `random' task.

\paragraph{Impact of generator learning rate.}
The above results show that PNM-GANG can accurately cover multiple modes, however,
not all modes are \emph{fully} covered. 
As also pointed out by~\citet{ACB17_WGAN}, the best response of $G$
against $\mA C$ is a single point with the highest `realness', and therefore the
WGAN they introduced uses fewer iterations for $G$ than for $C$. Inspired by
this, we investigate if we can reduce the mode collapse by reducing the learning
rate of $G$. The results in Figure~\ref{fig:slowG} show
that more area of the modes are covered confirming this hypothesis.  However, it
also makes clear that by doing so, we are now generating some data outside of
the true data, so this is a trade-off. 
We point out that also with total mode degeneration (where modes collapse to
Dirac delta peaks),
the PNM mechanism theoretically would still converge, by adding in ever more
delta peaks covering parts of the modes.  

\paragraph{Exploitability of solutions.} 
Finally, to complement the above qualitative analysis, 
we also provide a quantitative analysis of the solutions
found by GANs, MGANs and PNM-GANGs. We investigate to what extent 
they are exploitable 
by newly introduced adversaries with some fixed computational power (as modeled
by the complexity of the networks we use to attack the found solution).

In particular, for a given solution $\tilde{\mA{}} = (\tilde{\mA G},\tilde{\mA C})$ we use the following measure of exploitability:
\begin{multline}
    expl^{RB}(\tilde{\mA G},\tilde{\mA C})	
\triangleq	
	\text{RBmax}_{\mA G}\utA G(\mA G,\tilde{\mA C}) \\
        +
        \text{RBmax}_{\mA C}\utA C(\tilde{\mA G},\mA C), 
\end{multline}
where `RBmax' denotes an approximate maximization performed by an adversary of
some fixed complexity.

\begin{figure}[tb]
\centering
\vspace{-5mm}
\scalebox{0.96}{
\begin{tabular}{ccc}
\subfloat{\includegraphics[trim={1cm 0.5cm 0.8cm 1cm},clip,scale=0.45]{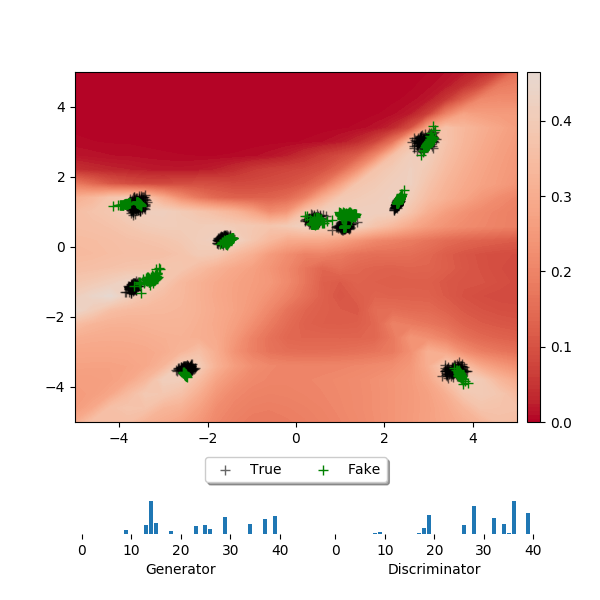}} 
    \\
\subfloat{\includegraphics[trim={1cm 0.5cm 0.5cm 1cm},clip, scale = 0.45]{dlog_random_9_5_5.png}} 
\end{tabular}
}\vspace{-4mm}
    \caption{Results for PNM with a learning rate of $7^{-4}$ for the generator (top). Compare with
    previous result (bottom, duplicated from the first row of Figure~\ref{fig:newresults}).
    }
\label{fig:slowG}
\end{figure}

\begin{figure*}[tb]
\centering
\includegraphics[scale=0.33]{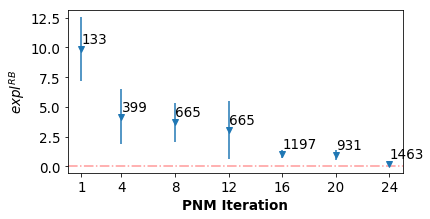} \hfill
\includegraphics[scale=0.33]{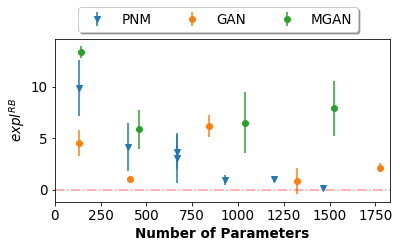} \hfill
\includegraphics[scale=0.33]{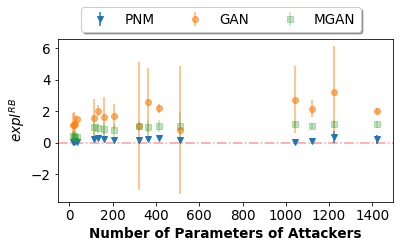}
\caption{Exploitability results for the nine randomly located modes task. 
    Left: exploitability of PNM-GANGs of various complexities (indicated numbers) found in different iterations.
    Middle: comparison of exploitability of PNM-GANGs, GANs and MGAN of various complexities.
    Right: exploitability of PNM-GANGs, GANs and MGAN, when varying the complexity of the attacking `RBmax' functions.
    Complexities are expressed in total number of parameters (for $G$ and $C$ together). See text for detailed explanation of all.
    }
\label{fig:exploitability}
\end{figure*}

That is, the `RBmax' functions are analogous to the $f_i^{RBBR}$ functions employed
in PNM, but the computational resources of `RBmax' could be different from those used for the $f_i^{RBBR}$.
Intuitively, it gives a higher score if $\tilde{\mA{}}$ is easier to exploit. 
However, it is not a true measure of distance to an equilibrium: it can return values that are lower
than zero which indicate that $\tilde{\mA{}}$ could not be exploited by the
approximate best responses.

Our exploitability is closely related to the use of GAN training metrics \citep{Im18ICLR}, but
additionally includes the exploitability of the classifier.
This is important: when only testing the exploitability of the generator, this does give a 
way to compare generators, {\em 
but it does not give a way to assess how far from equilibirum 
we might be}. 
Since finite GANGs are zero-sum games, distance to equilibrium 
is the desired performance measure. In particular, the exploitability of the
classifier actually may provide information about the quality of the generator:
if the generator holds up well against a perfect classifier, it should be
close to the data distribution. 
For a further motivation of this measure of exploitability, please see Appendix \ref{app:measureOfExploit}.

We perform two experiments on the 9 random modes task; results
for other tasks are in Appendix \ref{sec:app:exploitability-results}.
First, 
we investigate the
exploitability of solutions delivered by GANs, MGANs and GANGs of different
complexities (in terms of \emph{total} number of parameters used). 
For this, we compute `attacks' (approximate best responses) to these solutions 
using attackers of fixed complexity (a total of 453 parameters
for the attacking $G$ and $C$ together).

These results are shown in \fig\ref{fig:exploitability} (left and middle). The
left plot shows the exploitability of PNM-GANG after different numbers of iterations, as well as the
number of parameters used in the solutions found in those iterations
(a sum over all the networks in the support of the mixture). Error bars indicate
standard deviation.
It is apparent that PNM-GANG solutions with more parameters typically are less
exploitable. Also shown is that the variance of exploitability depends heavily on the
solution that we happen to attack.

The middle plot shows those same
results together with exploitability results we obtained for GANs and MGANs of different
complexities (all were trained to convergence). Note that here the
x-axis shows the complexity in terms of total parameters. 
The figure shows an approximately monotonic decrease in exploitability for GANGs for increasing number of parameters, while GANs and MGANs with higher complexity are still very exploitable. In contrast to GANGs, more complex architectures for GANs or MGANs are thus by no means a way to guarantee a better solution.

Secondly, 
we investigate what happens for 
the  converged GAN / PNM-GANG solution of \fig\ref{fig:newresults}, which have
comparable complexities,  when attacked with varying complexity attackers. 
We also employ an MGAN which has a significantly larger number of parameters (see Appendix \ref{sec:app:exploitability-results}). 
These results are shown in \fig\ref{fig:exploitability} (right). 
Clearly shown is that the PNM-GANG is robust with near-zero exploitability
even when attacked with high complexity attackers. The MGAN solution has a non-zero level of exploitability, roughly constant for several attacker complexities. 
In stark contrast, we see that the converged GAN solution is exploitable already
for low-complexity attackers, again suggesting that the GAN was stuck in an LNE far
away from a global NE.

Overall, these results demonstrate that GANGs can provide more robust solutions
than GANs/MGANs with the same number of parameters, suggesting that they are closer
to a Nash equilibrium and provide better generative models.

\section{Related work}
\label{sec:relatedWork}

\paragraph{Progress in zero-sum games.} 
\citet{BosanskyKLP14} devise a double-oracle algorithm for computing exact
equilibria in extensive-form games with imperfect information.
Their algorithm uses~\emph{best-response oracles}; PNM does so too, though in 
this paper using resource-bounded rather than exact best responses.
\citet{Lanctot17NIPS} generalize to non-exact sub-game routines.
Inspired by GANs, \citet{HazanSZ17} deal with general zero-sum settings with
non-convex loss functions. They introduce a weakening of local equilibria known
as \emph{smoothed local equilibria} and provide algorithms with guarantees on
the smoothed local regret. In contrast, we introduce a generalization of local
equilibrium (RB-NE) that allows for stronger notions of equilibrium, not only
weaker ones, depending on the power of the RBBR functions.
For the more restricted class of convex-concave zero-sum games, it was recently
shown that Optimistic Mirror Descent (a variant of gradient descent)
and its generalization Optimistic Follow-the-Regularized-Leader achieve faster
convergence rates than gradient descent~\citep{RakhlinS13a,RakhlinS13b}.
These algorithms have been explored in the context of GANs by~\cite{DaskISZ18ICLR}.
However, 
the convergence results do not apply as GANs are not convex-concave.

\paragraph{GANs.} 
The literature on GANs has been growing at an incredible rate, 
and a full overview of all the related works 
such as those by
\citet{ACB17_WGAN,ArjovskyB17,Huszar15_Exotic,NowozinCT16_FGAN,DaiABHC17_ICLR,ZhaoML16_EGAN_ICLR,AroraZ17,SalimansGZCRCC16,GulrajaniAADC17,Radford15arxiv}
is beyond the scop of this paper.
Instead we refer to \cite{Unterthiner18ICLR} for a reasonable comprehensive recent
overview.

\citet{Unterthiner18ICLR} introduce Coulomb GANs and show convergence for them
but only under the strong assumption that the ``generator samples can
move freely'' (which is not the case when training via gradient descent; samples
can only move small steps). 
Moreover, their approach essentially performs non-parametric density
estimation, which is based on the (Euclidean) distance between data points,
which we believe undermines one of the most attractive features of GANs
(not needing a distance metric). 

\citet{Karras18ICLR}, like PNM, incrementally grow the complexity of the
maintained models, but, unlike PNM, do not maintain a mixture.
Many of the techniques designed to improve GAN training, e.g.,
\citep{SalimansGZCRCC16,GulrajaniAADC17,Sonderby17ICLR,Creswell2018IEEESPM}, 
concern modifications to the update for one or both players. Such techniques 
can directly be used in GANGs by adapting the best response computation.


\paragraph{Explicit representations of mixtures of strategies.}
Recently, more researchers have investigated the idea of (more or less)
explicitly representing a set or mixture of strategies for the players.
For instance, \cite{ImMKT16} retains sets of networks that
are trained by randomly pairing up with a network for the other player thus
forming a GAN. This, like PNM, can be interpreted as a coevolutionary
approach, but unlike PNM, it does not have any convergence guarantees.

MAD-GAN \citep{Ghosh17arxiv} uses $k$ generators, but one discriminator. MGAN
\citep{Hoang18ICLR} proposes mixtures of $k$ generators, a classifier and a 
discriminator with weight sharing; and presents a theoretical analysis similar to
\citet{Goodfellow14NIPS27} assuming infinite capacity densities. Unlike PNM, none of these approaches have convergence guarantees.

Generally, explicit mixtures can bring advantages in two ways:
\emph{(1) Representation}: intuitively, a mixture of $k$ neural networks could better
represent a complex distribution than a single neural network of the same size, and
would be roughly on par with a single network that is $k$ times as big. \citet{Arora17ICML} show how to create such a bigger network that is
particularly suitable for dealing with multiple modes using a `multi-way
selector'. In our experiments we observed mixtures of
simpler networks leading to better performance than a single larger network of
the same total complexity (in terms of number of parameters). 
\emph{(2) Training}: Arora et al.\ use an architecture that is
tailored to representing a mixture of components and train a single such
network. We, in contrast, explicitly represent the mixture; given the
observation that good solutions will take the form of a mixture. This is a
form of domain knowledge that facilitates learning and convergence
guarantees. 

A closely related paper is the work by
\citet{Grnarova17arxiv}, which also builds upon game-theoretic tools to give certain
convergence guarantees. The main differences are as follows:

\begin{enumerate}
\item We provide a more general form of convergence (to an RB-NE) that is
applicable to \emph{all} architectures, that only depends on the power to compute best
responses, and show that PNM-GANG converges in this sense. We also show that if
agents can compute an $\epsilon$-best response, then the procedure converges to an $\epsilon$-NE.
\item \citet{Grnarova17arxiv} show that for a quite specific GAN architecture their first
algorithm converges to an $\epsilon$-NE. On the one hand, this result is an instantiation of our more
general theory: they assume they can compute exact (for $G$) and
$\epsilon$-approximate (for $C$) best responses; for such powerful players our
Theorem~\ref{thm:epsNashGivenSufficientResources} provides that guarantee. On
the other hand, their formulation works without discretizing the spaces of
strategies.
\item The practical implementation of their algorithm does not provide guarantees. 
\end{enumerate}

Finally, \citet{Ge18arxiv} propose a method similar to ours that uses \emph{fictitious
play} \citep{Brown1951iterative,Fudenberg98book} rather than PNM. Fictitious play does not explicitly model mixed strategies for the
agents, but interprets the opponent's historical behavior as such a mixed strategy. The
average strategy played by the `Fictitious GAN' approach converges to a Nash equilibrium
\emph{assuming that ``the discriminator and the generator are updated according to the
best-response strategy at each iteration''}, which follow from the result by
\cite{Danskin1981} which states that fictitious play converges in continuous zero-sum
games. Intuitively, fictitious play, like PNM, in each iteration only ever touches a
finite subset of strategies, and one can show that the value of such subgames converges.
While this result gives some theoretical underpinning to Fictitious GAN, of course in
practice the assumption is hard to satisfy and the notion of RB-NE that we propose may
apply to analyze their approach too. 
Also, in their empirical results they limit the history of actions (played neural networks
in previous iterations) to 5 to improve scalability at the cost of convergence
guarantees.
The Fictitious GAN is not explicitly shown to be more
robust than normal GANs, as we show in this paper, but it is demonstrated to produce high
quality images, thus showing the potential of game theoretical approaches to GANs to
scale.

\paragraph{Bounded rationality.}
The proposed notion of RB-NE is one of bounded rationality
\citep{Simon55behavioral}.
Over the years a number of different such notions have been proposed, e.g.,
see \citet{Russell97AIJ,Zilberstein11metareasoning}. 
Some of these also target agents in games. Perhaps the most
well-known such a concept is the quantal response equilibrium
\citep{McKelvey95GEB}. Other concepts take into account an explicit cost of
computation \citep{Rubinstein86JEC,Halpern14TopiCS}, or explicitly limit the
allowed strategy, for instance by limiting the size of finite-state
machines that might be employed \citep{Halpern14TopiCS}.
However, these notions are motivated to explain \emph{why} people might show
certain behaviors or \emph{how} a decision maker should use its
limited resources. We on the other hand, take the why and how of bounded
rationality as a given, and merely model the outcome of a resource-bounded
computation (as the computable set $\aAS i^{RB}\subseteq\aAS i$). In other
words, we make a minimal assumption on the nature of the
resource-boundedness, and aim to show that even under such general assumptions we
can still reach a form of equilibrium, an RB-NE, of which the quality can
be directly linked
(via Theorem~\ref{thm:epsNashGivenSufficientResources}) 
to the computational power of the agents.


\section{Conclusions}

\label{sec:conclusions}

We introduce finite GANGs---Generative Adversarial Network Games---a novel
framework for representing adversarial networks by formulating them as finite
zero-sum games. By tackling them with techniques working in mixed strategies
we can avoid getting stuck in local Nash equilibria (LNE).
As finite GANGs have extremely large strategy spaces we cannot expect to
exactly (or $\epsilon$-approximately) solve them. Therefore,
we introduced the
resource-bounded Nash equilibrium (RB-NE). 
This notion is richer than LNE in that it captures not only failures of
escaping local optima of gradient descent, but applies to any approximate 
best-response computations, including methods with random restarts. 

Additionally, GANGs can draw on a rich set of methods for solving zero-sum
games~\citep{Fudenberg98book,RakhlinS13b,Foster16NIPS,Oliehoek06GECCO}.
In this paper, we build on PNM and prove that the resulting method monotonically
converges to an RB-NE.
We empirically demonstrate that the resulting method does not suffer from
typical GAN problems such as mode collapse and forgetting.
We also show that the GANG-PNM solutions are closer to theoretical predictions, and
are less exploitable than normal GANs: by using PNM we can train models that are
more robust than GANs of the same total complexity, indicating they are closer to a
global Nash equilibrium and yield better generative performance.

\paragraph{Future work} 
We presented a framework that can have many
instantiations and modifications. 
For example, one direction is to employ different learning algorithms.
Another direction could focus on modifications of PNM, such as to allow
discarding ``stale'' pure strategies, which
would allow the process to run for longer without being inhibited by the size
of the resulting zero-sum ``subgame'' that must be maintained and repeatedly
solved.
The addition of fake uniform data as a guiding component suggests
that there might be benefit of considering ``deep-interactive learning''
where there is deepness in the number of players that interact in
order to give each other guidance in adversarial training. This could
potentially be modelled by zero-sum polymatrix games~\citep{CaiCDP16}.

\section*{Acknowledgments}
This research made use of a GPU donated by NVIDIA. 
F.A.O.\ is funded by EPSRC First Grant EP/R001227/1, and ERC Starting
Grant \#758824\textemdash INFLUENCE.

\bibliographystyle{abbrvnat}
\bibliography{gangs.bib}

\appendix
\onecolumn

\section{The zero-sum formulation of GANs and GANGs}
\label{app:zerosum}

In contrast to much of the GAN-literature,
we explicitly formulate GANGs as being zero-sum games. GANs \citep{Goodfellow14NIPS27}
formulate 
the payoff of the generator as a function of the fake data only: 
$\utA G = \E_{z\sim p_{z}}
\left[
    \mf
    \left(  1  -   \aA{C}(\aA G(z))   \right)
\right]$. 
However, it turns out that this difference typically has no implications
for the sought solutions. We clarify this with the following theorem,
and investigate particular instantiations below. In game theory, 
two games are called \emph{strategically equivalent} if
they possess exactly the same set of Nash equilibria; this (standard)
definition is concerned only about the mixed strategies played in equilibrium
and not the resulting payoffs. The following is a well-known game
transformation (folklore, see \citet{Liu96}) that creates a new strategically
equivalent game:
\begin{fact}
\label{fact:equiv}Consider a game $\Gamma$=$\left\langle \{1,2\},\{\aAS 1,\aAS 2\},\{\utA 1,\utA 2\}\right\rangle $.
Fix a pure strategy $s_{2}\in\aAS 2.$ Define $\bar{{\utA 1}}$ as
identical to $\utA 1$ except that $\bar{\utA 1}(\aA{i},\aA 2)=\utA 1(\aA{i},\aA 2)+c$
for all $\aA{i}\in\aAS 1$ and some constant $c$. 
We have that $\Gamma$ and 
$\bar{{\Gamma}}$=$\left\langle \{1,2\},\{\aAS 1,\aAS 2\},\{\bar{\utA 1},\utA 2\}\right\rangle $
are strategically equivalent.
\end{fact}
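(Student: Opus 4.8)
The plan is to verify directly that $\Gamma$ and $\bar{\Gamma}$ possess the same set of Nash equilibria, by showing that the transformation $\Gamma\mapsto\bar{\Gamma}$ leaves each player's best-response behaviour completely unchanged. Since a profile is an NE exactly when each player's strategy is a best response to the other's, equality of the two best-response correspondences immediately forces equality of the NE sets, which is the definition of strategic equivalence.

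For the concrete steps, I would first lift the pointwise hypothesis to mixed strategies. By assumption $\bar{\utA{1}}$ agrees with $\utA{1}$ on every pure profile except those in the column $s_{2}$, where it is larger by the constant $c$. Hence for any profile $\mA{}=\langle\mA{1},\mA{2}\rangle$, expanding the defining expectation and using $\sum_{\aA{1}}\mA{1}(\aA{1})=1$,
\[
    \bar{\utA{1}}(\mA{1},\mA{2}) \;=\; \utA{1}(\mA{1},\mA{2}) \;+\; c\cdot\mA{2}(s_{2}),
\]
and the crucial point is that the correction term $c\cdot\mA{2}(s_{2})$ is a function of $\mA{2}$ only and does not involve $\mA{1}$. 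Consequently, for every fixed $\mA{2}\in\Delta(\aAS{2})$ the maps $\mA{1}\mapsto\utA{1}(\mA{1},\mA{2})$ and $\mA{1}\mapsto\bar{\utA{1}}(\mA{1},\mA{2})$ differ by an additive constant, so they are maximised over $\Delta(\aAS{1})$ at exactly the same strategies; player~$1$'s best-response correspondence is therefore identical in $\Gamma$ and $\bar{\Gamma}$. Player~$2$'s payoff function is untouched, so its best-response correspondence is trivially the same. Combining these, a profile is a mutual best response in $\Gamma$ iff it is one in $\bar{\Gamma}$, so the two games share the same NE set. The same chain of equalities also shows that no player can profit from a \emph{pure} deviation $\aA{1}'\in\aAS{1}$ in one game iff the same holds in the other, so the argument is insensitive to whether one uses the pure- or mixed-deviation form of the NE definition.

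I do not expect a genuine obstacle here: the whole argument is bookkeeping and uses neither finiteness of the action sets nor any property of $\utA{2}$. The one place that warrants a little care is the passage to mixed strategies — one must check that the constant $c$ is multiplied by precisely the weight $\mA{2}(s_{2})$, so that it cancels when comparing two strategies of player~$1$ against the same $\mA{2}$, which is exactly where $\sum_{\aA{1}}\mA{1}(\aA{1})=1$ enters. For the downstream use of the Fact I would additionally remark that it may be applied once for each pure strategy of player~$2$, each application with its own constant; composing these yields the general ``add an arbitrary function of the opponent's action'' transformation, which is the form actually needed when relating the various GAN and GANG payoff conventions in the remainder of this appendix.
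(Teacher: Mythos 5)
Your proof is correct. Note that the paper itself does not prove this Fact at all: it is stated as a folklore game transformation with a citation to the literature, and is then only \emph{used} (iteratively, once per $\aA C\in\aAS C$) in the proof of Theorem~\ref{thm:fake-real-decomp}. Your direct verification supplies exactly the standard argument the paper omits: the key identity $\bar{\utA 1}(\mA 1,\mA 2)=\utA 1(\mA 1,\mA 2)+c\cdot\mA 2(s_2)$, whose correction term is independent of $\mA 1$, immediately gives that player~1's best-response correspondence is unchanged (player~2's payoff being untouched), hence the NE sets coincide. Your closing remark about composing one application of the Fact per pure strategy of player~2, each with its own constant, is precisely how the paper deploys it downstream, so the proposal is complete and consistent with the paper's intended use.
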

\begin{thm}
\label{thm:fake-real-decomp}
Let $Fake_{C}(\aA G,\aA C)$, and $Real_{C}(\aA C)$ be arbitrary functions, then 
any finite (non-zero-sum) two-player
game between $G$ and $C$ with payoffs of the following form:
\[
\utA G=Fake_{G}(\aA G,\aA C) \triangleq -Fake_{C}(\aA G,\aA C),
\]
\[
\utA C=Real_{C}(\aA C)+Fake_{C}(\aA G,\aA C),
\]
is strategically equivalent to a zero-sum game where $G$ has instead payoff
$\bar{\utA G}\defas-Real_{C}(\aA C)-Fake_{C}(\aA G,\aA C).$\end{thm}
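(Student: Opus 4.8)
## Proof Proposal

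\textbf{Overall approach.} The plan is to apply Fact~\ref{fact:equiv} in a straightforward way: the difference between the non-zero-sum game and the target zero-sum game lies only in $G$'s payoff, and that difference is exactly of the additive form handled by Fact~\ref{fact:equiv}. So the work is essentially to (i) compute the difference between $\utA G$ and $\bar{\utA G}$, (ii) check that this difference is a function of $\aA C$ alone (i.e.\ it does not depend on $G$'s own strategy), and (iii) invoke Fact~\ref{fact:equiv} to conclude strategic equivalence. A small wrinkle is that Fact~\ref{fact:equiv} as stated adds a \emph{constant} $c$ after fixing a single pure strategy $s_2$ of the opponent, whereas here the quantity added to $G$'s payoff varies with $\aA C$; I would first note that Fact~\ref{fact:equiv} extends immediately to the case where one adds, for each fixed pure opponent strategy $\aA C$, a separate constant $c(\aA C)$ (apply the fact once per pure strategy of $C$, or simply observe that the argument in \citet{Liu96} is unaffected since $G$'s best-response correspondence against any fixed $\aA C$ is unchanged when a $\aA C$-dependent constant is added).

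\textbf{Key steps in order.} First I would write out
\[
\utA G(\aA G,\aA C) = -Fake_{C}(\aA G,\aA C),
\qquad
\bar{\utA G}(\aA G,\aA C) = -Real_{C}(\aA C) - Fake_{C}(\aA G,\aA C),
\]
and form the difference $\bar{\utA G}(\aA G,\aA C) - \utA G(\aA G,\aA C) = -Real_{C}(\aA C)$. Second, I would observe that this difference depends only on $\aA C$ and not on $\aA G$; hence for each fixed pure strategy $\aA C \in \aAS C$ it is a constant $c(\aA C) \defas -Real_{C}(\aA C)$ added uniformly across all of $G$'s strategies. Third, applying the ($\aA C$-indexed) version of Fact~\ref{fact:equiv} shows that the original game with payoffs $(\utA G, \utA C)$ and the modified game with payoffs $(\bar{\utA G}, \utA C)$ are strategically equivalent. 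Fourth, I would verify the modified game is genuinely zero-sum: $\bar{\utA G}(\aA G,\aA C) + \utA C(\aA G,\aA C) = \big(-Real_{C}(\aA C) - Fake_{C}(\aA G,\aA C)\big) + \big(Real_{C}(\aA C) + Fake_{C}(\aA G,\aA C)\big) = 0$, as required.

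\textbf{Main obstacle.} There is no deep difficulty here; the only thing that needs a line of care is the mismatch between the ``single constant $c$'' in Fact~\ref{fact:equiv} and the ``$\aA C$-dependent constant $-Real_{C}(\aA C)$'' we actually need. I would handle this either by a short remark that Fact~\ref{fact:equiv} applies verbatim when the additive term is allowed to depend on the (fixed) opponent strategy — since the argument only uses that $G$'s preferences over its own strategies are preserved against each fixed pure opponent action — or by iterating Fact~\ref{fact:equiv} over the (finitely many) pure strategies $\aA C \in \aAS C$, adding $c(\aA C)$ one opponent-column at a time. Everything else is a one-line substitution, so the proof should be short.
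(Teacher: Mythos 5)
Your proposal is correct and follows essentially the same route as the paper: the paper's proof likewise observes that adding $-Real_{C}(\aA C)$ to $G$'s utility adds a different constant for each fixed pure strategy $\aA C$ of $C$, and concludes by applying Fact~\ref{fact:equiv} iteratively over all $\aA C\in\aAS C$. Your extra check that the resulting game is genuinely zero-sum is a harmless (and sensible) addition that the paper leaves implicit.
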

\begin{proof}
By adding$-Real_{C}(\aA C)$ to $G$'s utility function, for each pure strategy $\aA C$ of $C$ 
we add a different constant
to all utilities of $G$ against $\aA C$. Thus, by applying
Fact \ref{fact:equiv} iteratively for all $\aA C\in\aAS C$ we see
that we produce a strategically equivalent game.
\end{proof}

Next, we formally specify the conversion of existing GAN models
to GANGs. We consider the general measure function 
that covers GANs ($\mf(x)=\log(x)$)  and WGANs ($\mf(x)=x-1$). In these models, the payoffs are specified
as
\[
\utA G(\aA G,\aA C)\defas-\E_{z\sim p_{z}}\left[\mf\left(1-\aA C\left(\aA G(z)\right)\right)\right],
\]
\[
\utA C(\aA G,\aA C)\defas\E_{x\sim p_{d}}\left[\mf\left(\aA C(x)\right)\right]+\E_{z\sim p_{z}}\left[\mf\left(1-\aA C\left(\aA G(z)\right)\right)\right].
\]
{\sloppy
These can be re-written using 
$
Fake_{C}(\aA G,\aA C)
=
\E_{z\sim p_{z}}\left[\mf\left(1-\aA C\left(\aA G(z)\right)\right)\right]
$
and 
$Real_{C}(\aA C)
=
\E_{x\sim p_{d}}\left[\mf\left(\aA C(x)\right)\right]
$.
This means that we can employ \rthm\ref{thm:fake-real-decomp} and
equivalently define 
a GANG with zero-sum payoffs that preserves the NEs.
}

In practice, most work on GANs uses a different
objective, introduced by \cite{Goodfellow14NIPS27}. They say that {[}formulas
altered{]}: 
\begin{quote}
``Rather than training $G$ to minimize $\log(1-\aA C(\aA G(z)))$ we
can train $G$ to maximize $\log\aA C(\aA G(z))$. This objective function
results in the same fixed point of the dynamics
of $G$ and $C$ but provides much stronger gradients early in learning.'' 
\end{quote}
This means that they redefine 
$\utA G(\aA G,\aA C)\defas\E_{z\sim p_{z}}\left[\mf\left(\aA C\left(\aA G(z)\right)\right)\right],$
which still can be written as 
$Fake_{G}(\aA G,\aA C)=\E_{z\sim p_{z}}\left[\mf\left(\aA C\left(\aA
G(z)\right)\right)\right]$, which means that it is candidate for transformation
to $\bar{\utA G}$.
Now, \emph{as long as the classifier's payoff is also adapted}
we can still write the payoff functions in the form of \rthm\ref{thm:fake-real-decomp}.
That is, the trick is compatible with a zero-sum formulation, as
long as it is also applied to the classifier. This then yields:
\begin{equation}
    \utA C(\aA G,\aA C) =
        \E_{x\sim p_{d}}
        [\mf(\aA C(x))]
        -
        \E_{z\sim p_{z}}
        [\phi(\aA C(\aA G(z)))].
\end{equation}

\section{Experimental setup}
\label{app:expsetup}
Table \ref{tab:settings} summarizes the settings for GAN and PNM training.
As suggested by \citep{ganhacks}, we use leaky ReLU as inner activation
for our GAN implementation to avoid sparse gradients. Generators
have linear output layers. Classifiers use sigmoids for the final layer. Both classifiers and generators are multi-layer perceptrons with 3 hidden layers.
We do not use techniques such as Dropout or Batch Normalization, as they did not
yield significant improvements in the quality of our experimental results. The MGAN configuration is identical to that of Table 3 in Appendix C1 of \citet{Hoang18ICLR}.

\begin{table}[ht]
\centering
\scalebox{0.8}{
\begin{tabular}{c|c|c|c|}
\cline{2-3}
 & \textbf{GAN} & \textbf{RBBR} \\ \hline
\multicolumn{1}{|c|}{\textit{Learning Rate}} & $3\cdot 10^{-4}$ & $5 \cdot 10^{-3}$ \\ \hline
\multicolumn{1}{|c|}{\textit{Batch Size}} & $128$ & $128$ \\ \hline
\multicolumn{1}{|c|}{\textit{Z Dimension}} & $40$ & $5$ \\ \hline
\multicolumn{1}{|c|}{\textit{H Dimension}} & $50$ & $5$ \\ \hline
\multicolumn{1}{|c|}{\textit{Iterations}} & $20000$ & $750$ \\ \hline
\multicolumn{1}{|c|}{\textit{Generator Parameters}} & $4902$ & $92$ \\ \hline
\multicolumn{1}{|c|}{\textit{Classifier Parameters}} & $2751$ & $61$ \\ \hline
\multicolumn{1}{|c|}{\textit{Inner Activation}} & Leaky ReLU & Leaky ReLU \\ \hline
\multicolumn{1}{|c|}{\textit{Measuring Function}} & $\log$ & $10^{-5}$-bounded $\log$ \\ \hline
\end{tabular}
}\\
~
\caption{Settings used to train GANs and RBBRs.}
\label{tab:settings}
\end{table}

\subsection{Using uniform fake data to regularize classifier best responses}
\label{app:unifake}
\begin{figure*}[tb]
\centering
\begin{tabular}{ccccc}
\hspace{-0.5cm}
\subfloat{\includegraphics[scale = 0.26]{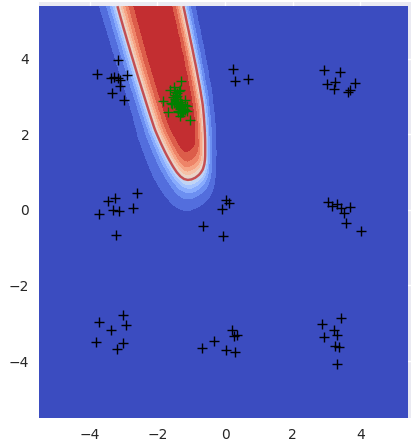}} &
\subfloat{\includegraphics[scale = 0.26]{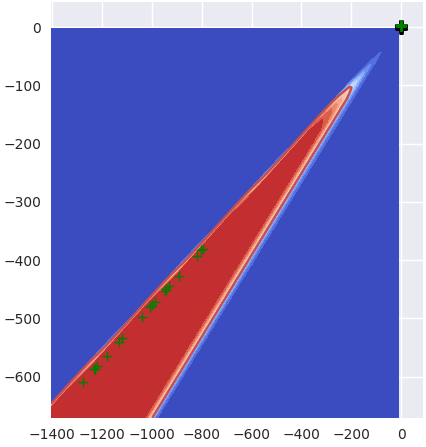}} &
\subfloat{\includegraphics[scale = 0.26]{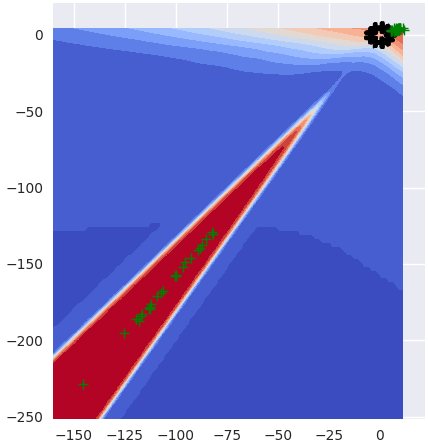}} &
\subfloat{\includegraphics[scale = 0.26]{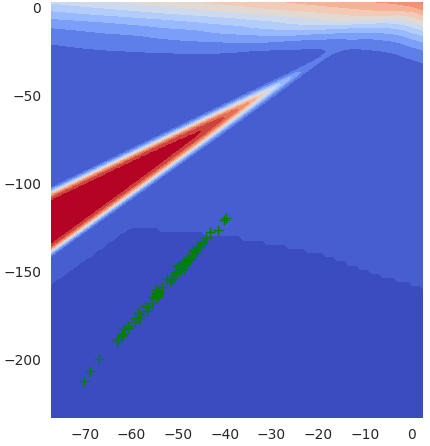}} &
\subfloat{\includegraphics[scale = 0.26]{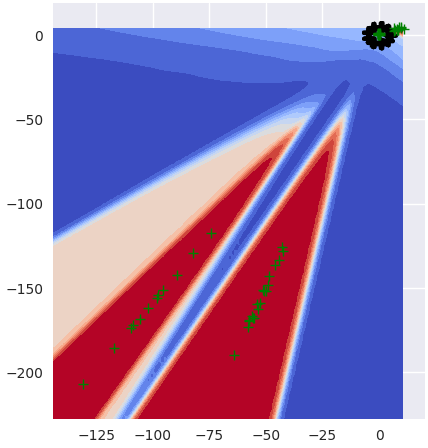}} 
\end{tabular}
\caption{Illustration of exploitation of `overfitted' classifier best-response.}
\label{fig:overfitted}
\end{figure*}

\paragraph{Plain application of PNM.}
In GANGs, $G$ informs $C$ about what good strategies are and vice versa.
However, as we will make clear here, this $G$
has limited incentive to provide the best possible training signal
to $C$.

This is illustrated in \fig~\ref{fig:overfitted}. 
The left two figures 
show the \emph{same} best response by $C$: zoomed in on the
data and zoomed out to cover some fake outliers. Clearly, $C$ needs
to find creative solutions to try and both remove the faraway
points and also do good near the real data. As a result, it ends up with
the shown narrow beams in an effort to give high score to the true
data points (a very broad beam would lower their scores), but this
exposes $C$ to being exploited in later iterations: $G$ needs to merely
shift the samples to some other part of the vast empty space around
the data. 

This phenomenon is nicely illustrated by the remaining three
plots  (that are from a different training run, but illustrate it well):
the middle plot shows an NE that targets one beam, this is exploited
by $G$ in its next best response (fourth image, note the different scales
on the axes, the `beam' is the same). The process continues, and
$C$ will need to find mixtures of all these type of complex counter
measures (rightmost plot). This process can take a long time.

\paragraph{PNM with added uniform fake data.} 
The GANG formalism
allows us to incorporate a simple way to resolve this
issue and make training more effective. In each iteration, we look
at the total span (i.e., bounding box) of the real and fake data combined,
and we add some uniformly sampled fake data in this bounded
box (we used the same amount as fake data produced by $G$). In that
way, we further guide $C$ in order to better guide the
generator (by directly making clear that all the area beyond the true
data is fake). The impact of this procedure is illustrated by \fig~\ref{fig:unifakeconv},
which shows the payoffs that the maintained mixtures $\mA G,\mA C$
achieve against the RBBRs computed against them (so this is a measure
of security), as well as the `payoff for tests' ($\utA{BR}$). 
Clearly, adding uniform fake data leads
to much faster convergence. 
As such, we perform our main comparison to GANs with this uniform
fake data component added in. {
\newlength{\mywOneB}
\setlength{\mywOneB}{0.7\columnwidth}
\begin{figure}[tb]
\centering
\scalebox{0.65}{
\begin{tabular}{cc}
    \hspace{-8mm}
\subfloat{\includegraphics[scale = 0.5]{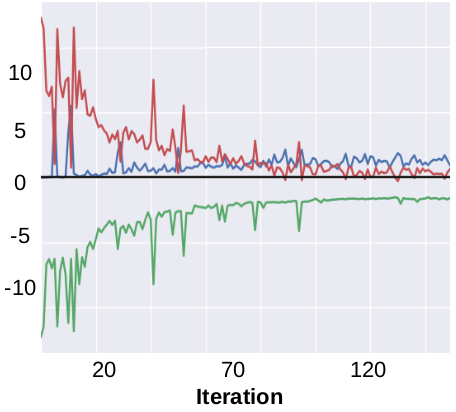}} &
\subfloat{\includegraphics[scale=0.5]{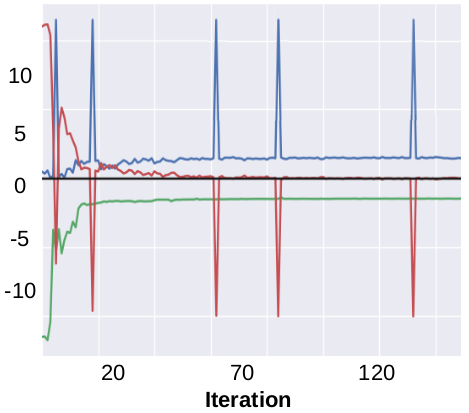}} 
\end{tabular}
}
\caption{Convergence without (left) and with (right) adding uniform fake data.
    Shown is payoff as a function of the number of iterations of PNM: generator (blue), classifier (green), tests (red).
    The tests that do not generate positive payoff (red line $< 0$) are not 
    added to the mixture.
}
\label{fig:unifakeconv}
\end{figure}
}

\subsection{A measure of exploitability}
\label{app:measureOfExploit}


GANGs are guaranteed to converge to an RB-NE, while GANs (in the best
case) only converge to a local NE (LNE). Therefore we hypothesize
that GANGs, given sufficient computational power, yield better solutions
that are `closer to a Nash equilibrium'. This is important, even
in settings where we only care about the performance of the generator:
\emph{by formulating the problem as a game (GAN or GANG),} \emph{we
have committed to approach the problem by trying to find an NE} (even
if that is computationally intractable).

Here we propose a measure to test how far from a Nash equilibrium
we might be by measuring to what extent the found solution is exploitable
by adversaries of some complexity. While this is far from a perfect
measure, ourselves only being equipped with bounded computational
resources, this might be the best we can do.

In particular, recall that at an NE, we realize the value of the game:

\begin{equation}
\min_{\mA C}\max_{\mA G}\utA G(\mA G,\mA C)=\max_{\mA G}\min_{\mA C}\utA G(\mA G,\mA C)=\gameval
\end{equation}

In other words, for an equilibrium $\left\langle \mA G^{*},\mA C^{*}\right\rangle $
we have that:
\begin{equation}
\max_{\mA G}\utA G(\mA G,\mA C^{*})=\gameval
\end{equation}

\begin{equation}
\min_{\mA C}\utA G(\mA G^{*},\mA C)=\gameval=\min_{\mA C}-\utA C(\mA G^{*},\mA C)=-\max_{\mA C}\utA C(\mA G^{*},\mA C)
\end{equation}
which means that $\max_{\mA C}\utA C(\mA G^{*},\mA C)=-\gameval$.
It also means that if we would know $\gameval$, we would be able
to say `how far away' some found $\tilde{\mA C}$ is from an equilibrium
strategy $\mA C^{*}$ (in terms of value) by looking at the \emph{best-response
exploitability}\textbf{:}
\begin{equation}
expl^{BR}(\tilde{\mA C})=\max_{\mA G}\utA G(\mA G,\tilde{\mA C})-\gameval\geq0
\end{equation}
where the inequality holds assuming we can find the actual best response.
Similarly for a $\tilde{\mA G}$ we can have a look at
\[
\min_{\mA C}\utA G(\tilde{\mA G},\mA C)-\gameval\leq0
\]
I.e., if $\tilde{\mA G}$ is an equilibrium strategy $\min_{\mA C}\utA G(\tilde{\mA G},\mA C)=\gameval$
but if $\tilde{\mA G}$ is less robust, the minimum is going to be
lower leading to a negative value of the expression (again assuming
perfect best response $\mA C$). Since negative numbers are unnatural
to interpret as a distance, we define (again, assuming perfect best response computation):
\begin{equation}
expl^{BR}(\tilde{\mA G})=\gameval-\min_{\mA C}\utA G(\tilde{\mA G},\mA C)=\gameval+\max_{\mA C}-\utA G(\tilde{\mA G},\mA C)=\gameval+\underbrace{\max_{\mA C}\utA C(\tilde{\mA G},\mA C)}_{\text{at least \ensuremath{-\gameval}}}\geq0
\end{equation}
Note that, expanding the payoff function, and realizing that the maximum will
be attained at a deterministic strategy $\aA C$:
\[
expl^{BR}(\tilde{\mA G})=\gameval+\underbrace{\max_{\aA C}\left(\mathbf{E}_{x\sim p_{d}}\left[\mf(\aA C(x))\right]+\mathbf{E}_{z\sim p_{z},z\sim\tilde{\mA G}(z)}\left[\mf(1-\aA C(x))\right]\right)}_{\text{`training divergence or metric'}},
\]
which means this exploitability corresponds ``to optimizing the divergence
and distance metrics that were used to train the GANs'' \citep{Im18ICLR},
offset by $\gameval$.

In the theoretical non-parametric setting (with infinite capacity
densities as the strategies) this is directly useful, because then
we know that the generator is able to exactly match the density, and
that $\gameval=\log4$ (for the $\log$ measuring function, which
leads to a correspondence to the Jensen-Shannon divergence \citep{Goodfellow14NIPS27}),
since $C$ will not be able to discriminate (give higher `realness
score' to) real data from on fake data.

However, this may not be a score that is attainable in finite GANG
or GANs: First, there might not be a (mixture of) neural networks
in the considered class that will match the data density, second there
might not be classifier strategies that perfectly classify this discrepancy
between the true and generated fake data. Which means that we can
not deduce $\gameval$. Therefore, even when we assume we can compute
perfect best responses, in the finite setting computing these distances
might not be sufficient to determine how far from a Nash equilibrium
we are. Like \citet{Im18ICLR} and others before them, one might hope
that using a neural network classifier to compute a finite approximation
to the aforementioned divergences and metrics., i.e., approximating
$expl^{BR}(\tilde{\mA G})$, will still give a way to compare generators
(and it does), but it does not tell how far from equilibrium one is.

However, even though we do not know $\gameval$, \emph{in the case
where we can compute perfect best responses}, we can compute a notion
of distance of a tuple $\left\langle \tilde{\mA G},\tilde{\mA C}\right\rangle $
to equilibrium by looking at the sum:

\begin{multline}
expl^{BR}(\tilde{\mA G},\tilde{\mA C})\triangleq expl^{BR}(\tilde{\mA C})+expl^{BR}(\tilde{\mA G})=\\
\left(\max_{\mA G}\utA G(\mA G,\tilde{\mA C})-\gameval\right)+\left(\gameval+\max_{\mA C}\utA C(\tilde{\mA G},\mA C)\right)=\max_{\mA G}\utA G(\mA G,\tilde{\mA C})+\max_{\mA C}\utA C(\tilde{\mA G},\mA C)\label{eq:perfect-BR-distance-to-nash}
\end{multline}
So by reasoning about the tuple $(\tilde{\mA G},\tilde{\mA C})$ rather
than only $\tilde{\mA G}$, we are able to eliminate the factor of
uncertainty: $\gameval$.

We therefore propose to take this approach, also in the case where
we cannot guarantee computing best responses (the second factor of
uncertainty). However, note that in (\ref{eq:perfect-BR-distance-to-nash}),
since both terms are guaranteed to be larger than 0, we do not need
to worry about cancellations of terms and the measure will never underestimate.
This is no longer the case when using approximate maximization. Still
we can define a \emph{resource-bounded} variant:

\begin{eqnarray}
expl^{RB}(\tilde{\mA G},\tilde{\mA C}) & \triangleq & \left(\text{RBmax}_{\mA G}\utA G(\mA G,\tilde{\mA C})-\gameval\right)+\left(\gameval+\text{RBmax}_{\mA C}\utA C(\tilde{\mA G},\mA C)\right)\label{eq:RBBR-dist-intermediate}\\
& = & \text{RBmax}_{\mA G}\utA G(\mA G,\tilde{\mA C})+\text{RBmax}_{\mA C}\utA C(\tilde{\mA G},\mA C).\label{eq:RBBR-distance-to-nash}
\end{eqnarray}
It does not eliminate the second source of uncertainty, but neither
does approximating $expl^{BR}(\tilde{\mA G})$. This is a perfectly
useful tool (a lower bound to be precise) to get some information
about the distance to an equilibrium, as long as we are careful with
its interpretation. In particular, since either or both of the terms
of (\ref{eq:RBBR-dist-intermediate}) could end up being lower than
0 (due to failure of computing an exact best response), we might end
up underestimating the distance, and we can even get negative values.
However, $expl^{RB}$ is still useful for \emph{comparing} different
found solution pairs $\left\langle \tilde{\mA G},\tilde{\mA C}\right\rangle $
and $\left\langle \tilde{\mA G}',\tilde{\mA C}'\right\rangle $ as
long as we use the same computational resources to compute approximate
best responses against them. Negative values of $expl^{RB}(\tilde{\mA G},\tilde{\mA C})$
should be interpreted as ``robust up to our computational resources
to attack it''.

\section{Additional empirical results}

\subsection{Low generator learning rate}

\fig\ref{fig:slowGfull} shows the results of a lower learning rate for the
generator for all 9 mode tasks. The general picture is the same as for the
random 9 modes task treated in the main paper: the modes are better covered by
fake data, but in places we see that this is at the expense of accuracy. 

\begin{figure}[tb]
\centering
\vspace{-6mm}

\scalebox{0.96}{
\begin{tabular}{ccc}
\hspace{-7mm}
\subfloat{\includegraphics[width = 0.35\columnwidth]{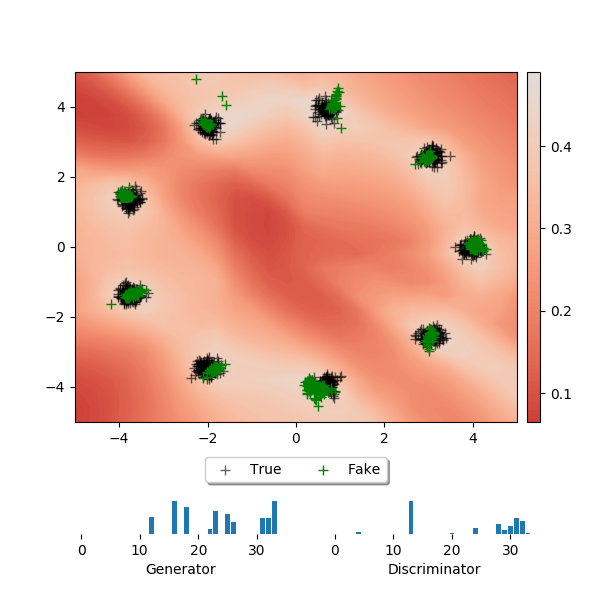}} &
\subfloat{\includegraphics[width = 0.35\columnwidth]{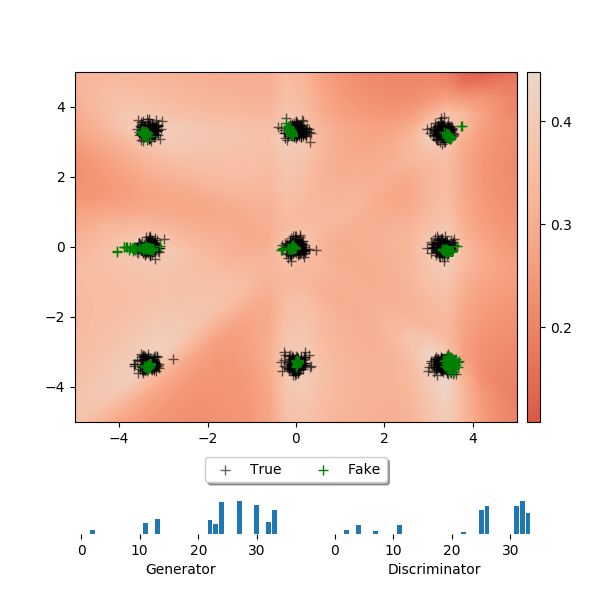}}
\subfloat{\includegraphics[width = 0.35\columnwidth]{rand}} &
\end{tabular}
}\vspace{-4mm}
\caption{Results for PNM with a learning rate of $7^{-4}$ for the generator. Compare with
the first row of Figure~\ref{fig:newresults}.}
\label{fig:slowGfull}
\end{figure}

\subsection{Exploitability results}
\label{sec:app:exploitability-results}

MGAN \cite{Hoang18ICLR} proposes a setup with a mixture of $k$ generators, a
classifier, and 
a discriminator. In their setting, the generator mixture aims
to create samples which match the training data distribution, while the
discriminator distinguishes real and generated samples, and the classifier
tries to determine which generator a sample comes from.  Similar to
\citet{Goodfellow14NIPS27},  MGAN presents a theoretical analysis assuming
infinite capacity densities. We use MGAN as a state-of-the art baseline that
was explicitly designed to overcome the problem of mode collapse.

Figure \ref{fig:mgan_samples} shows the results of MGAN on the mixture of Gaussian
tasks. 
MGAN results were obtained with an architecture and hyperparameters which
exactly match those proposed by \citet{Hoang18ICLR} for a similar task. This
means that the MGAN models shown
use many more parameters (approx. 310,000) than the GAN and GANG models (approx.
2,000). 
MGAN requires the number of generators to be chosen upfront as a hyperparameter of 
the method.
We chose this to be equal to the number of mixture components, so that MGAN
could cover all modes with one generator per mode. 
We note that PNM does not require such a hyperparameter to be set, nor does PNM
require the  related ``diversity'' hyperparameter of the MGAN method (called $\beta$ in
the MGAN paper).

Looking at Figure \ref{fig:mgan_samples}, we see that MGAN results do seem
qualitatively quite good, even though there is one missed mode (and thus also
one mode covered by 2 generators) on the randomly located components task (right
column).

\begin{figure}[ht]
\centering
\scalebox{0.96}{
\begin{tabular}{ccc}
\hspace{-7mm}
\subfloat{\includegraphics[trim={0cm 2.6cm 0cm 0cm}, clip, width = 0.35\columnwidth]{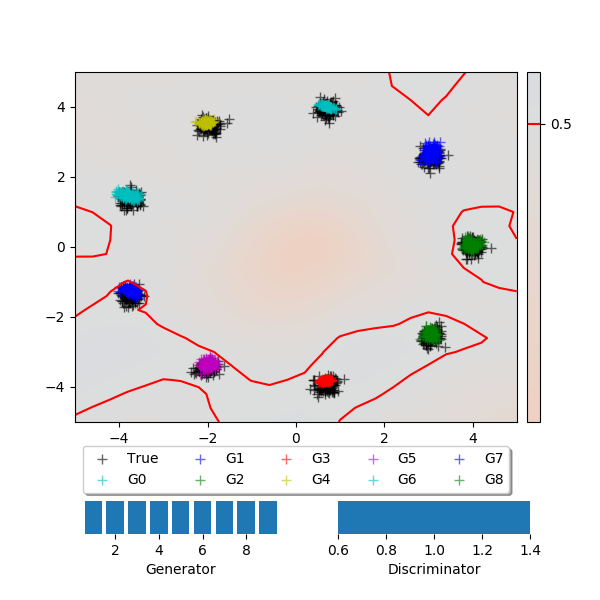}} &
\subfloat{\includegraphics[trim={0cm 2.6cm 0cm 0cm}, clip, width = 0.35\columnwidth]{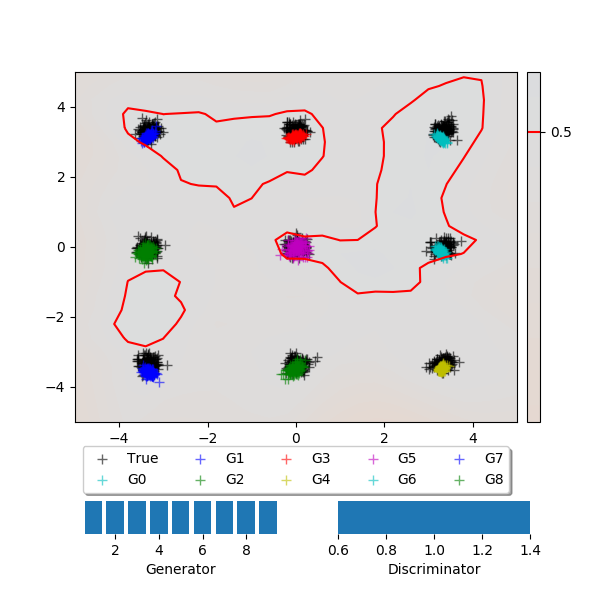}}
\subfloat{\includegraphics[trim={0cm 2.6cm 0cm 0cm}, clip, width = 0.35\columnwidth]{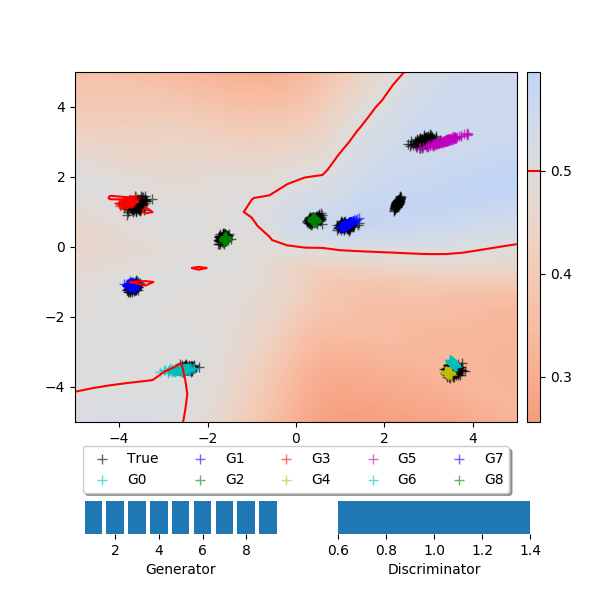}} &
\end{tabular}
}
\caption{Results for MGAN on several mixture of Gaussian tasks with 9 modes. Markers correspond to samples created by each generator.}
\label{fig:mgan_samples}
\end{figure}

\begin{figure*}[h]
\centering
\begin{tabular}{ccc}

\hspace{-0.5cm}
\subfloat{\includegraphics[scale=0.353]{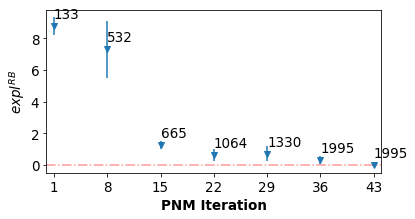} } &
\hspace{-0.5cm}
\subfloat{\includegraphics[scale=0.353]{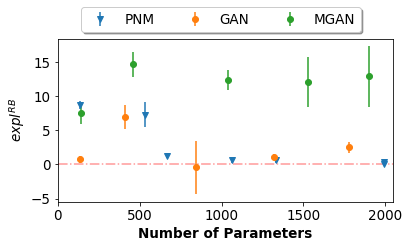} } &
\hspace{-0.5cm}
\subfloat{\includegraphics[scale=0.353]{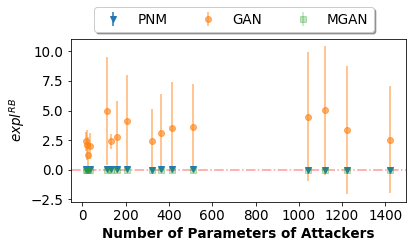} } \\

\hspace{-0.5cm}
\subfloat{\includegraphics[scale=0.353]{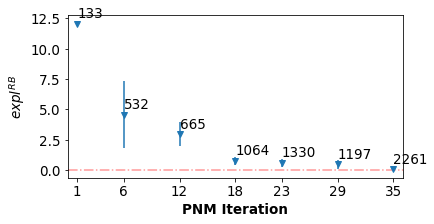} } &
\hspace{-0.5cm}
\subfloat{\includegraphics[scale=0.353]{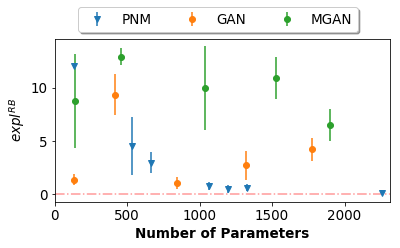} } &
\hspace{-0.5cm}
\subfloat{\includegraphics[scale=0.353]{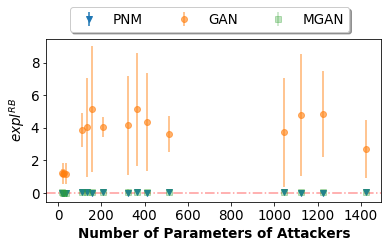} } \\

\hspace{-0.5cm}
\subfloat{\includegraphics[scale=0.353]{figs-exploitability-with-MGAN/random1.png} } &
\hspace{-0.5cm}
\subfloat{\includegraphics[scale=0.353]{figs-exploitability-with-MGAN/random2.png} } &
\hspace{-0.5cm}
\subfloat{\includegraphics[scale=0.353]{figs-exploitability-with-MGAN/random3.png} }

\end{tabular}

\caption{Exploitability results all 9 mode tasks. Top to bottom: round, grid, random.}
\label{fig:exploitabilitySupp}
\end{figure*}

Figure~\ref{fig:exploitabilitySupp} shows our exploitability results for all
three tasks with nine modes. We observe roughly the same trend across the three
tasks. The left column plots show the exploitability  of GANG after different
numbers of iterations (with respect to an attacker of fixed complexity: 453
parameters for the attacking G and C together), as well as the number of
parameters used in the solutions found in those iterations (a sum over all the
networks in the support of the mixture for both players). Error bars indicate
standard deviations over 15 trials. Note how the exploitability of the found
solution monotonically decreases as the PNM iterations increase. 

The middle column shows how exploitable GAN, MGAN and PNM-GANG models of
different complexities are: the x-axis indicates the total number of parameters, while
the y-axis shows the exploitability. The PNM results are the same points also
shown in the left column, but repositioned at the appropriate place on the
x-axis. All data points are exploitability of models that were trained until
convergence. 
Note that here the x-axis shows the complexity in terms of total parameters. The
figure shows an approximately monotonic decrease in exploitability for GANGs,
while GANs and MGANs with higher complexity are still very exploitable in many
cases. In contrast to GANGs, more complex architectures for GANs or MGANs are thus 
not necessarily a way to guarantee a better solution.

Additionally, we investigate the exploitability of the trained models presented
in Figure 1 when attacked by neural networks of varying complexity.  These
results are shown in the right column of Figure \ref{fig:exploitabilitySupp}.
Clearly shown is that the PNM-GANG is robust with near-zero exploitability even
when attacked with high-complexity attackers.  The MGAN models also have low
exploitability, but recall that these models are \emph{much} more complex (GANG
and GAN models have approximately 2,000 parameters, while the MGAN model
involves approximately 310,000 parameters).  Even with such a complex model, in
the `random' task, the MGAN solution has a non-zero level of exploitability,
roughly constant for several attacker complexities.  This is related to the
missed mode and the fact that two of the MGAN generators collapsed to the same
lower-right mode in Figure 1.  In stark contrast to both PNM-GANGs and MGAN, we
see that the converged GAN solution is exploitable already for low-complexity
attackers, again suggesting that the GAN was stuck in an Local Nash Equilibrium
far away from a (global) Nash Equilibrium.

As stated in the main paper, the variance of the exploitability depends
critically on the solution that is attacked. The top row of Figure~\ref{fig:exploit_variance} shows the results of three different attacks of
the $G$ against the $\mA C$ found by the GAN (top), and PNM-GANG (bottom).
We see that in the top row, due to the shape of  $\mA{C}$, the attacking generators 
reliably find the blue region with large classifier scores. 
On the other hand, in the bottom row, we see that the attacking generators
sometimes succeed in finding a blue area, but sometimes get stuck in a local
optimum (bottom left plot).

\begin{figure*}[tb]
\centering
\begin{tabular}{ccc}
\subfloat{\includegraphics[trim={1cm 2.8cm 0.5cm 1cm},clip, scale = 0.3512897]{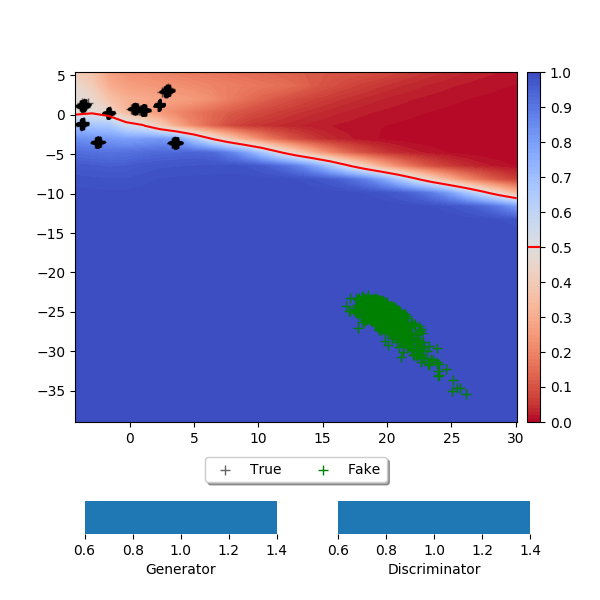}} &
\subfloat{\includegraphics[trim={1cm 2.8cm 0.5cm 1cm},clip, scale = 0.3512897]{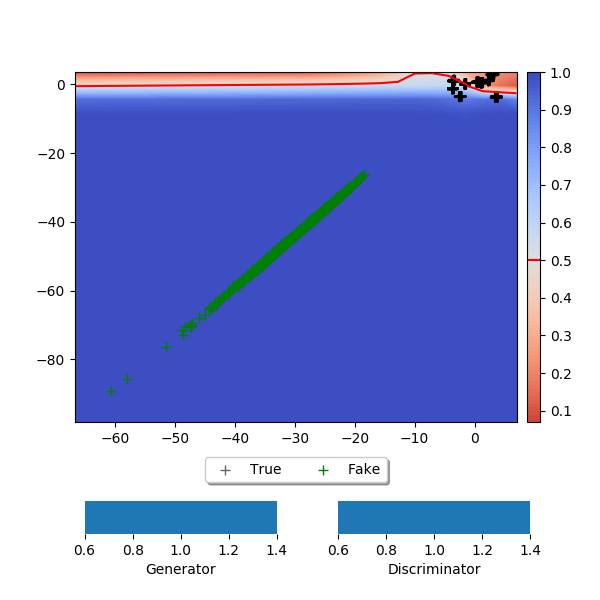}} &
\subfloat{\includegraphics[trim={1cm 2.8cm 0.5cm 1cm},clip, scale = 0.3512897]{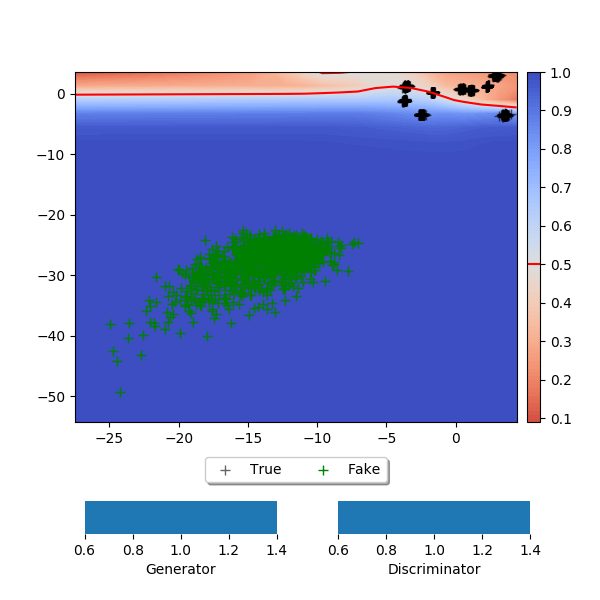}} \\

\subfloat{\includegraphics[trim={1cm 2.8cm 0.5cm 1cm},clip, scale = 0.3512897]{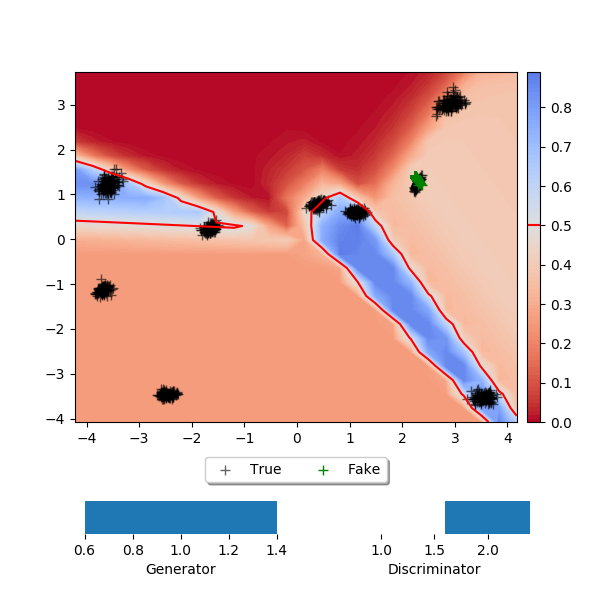}} &
\subfloat{\includegraphics[trim={1cm 2.8cm 0.5cm 1cm},clip, scale = 0.3512897]{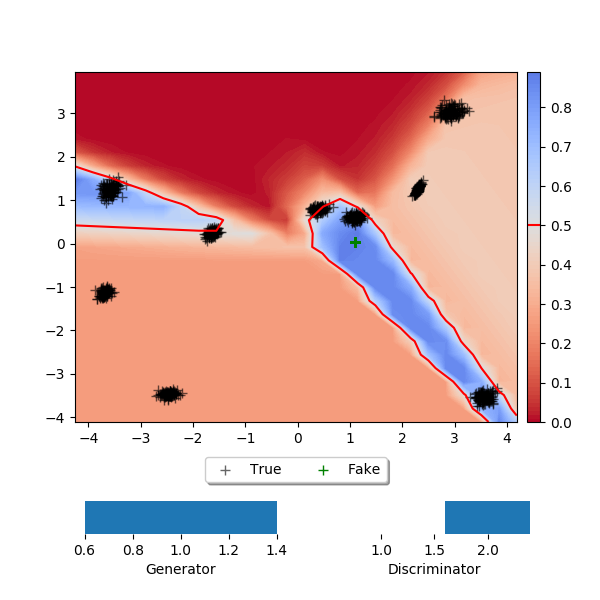}} &
\subfloat{\includegraphics[trim={1cm 2.8cm 0.5cm 1cm},clip, scale = 0.3512897]{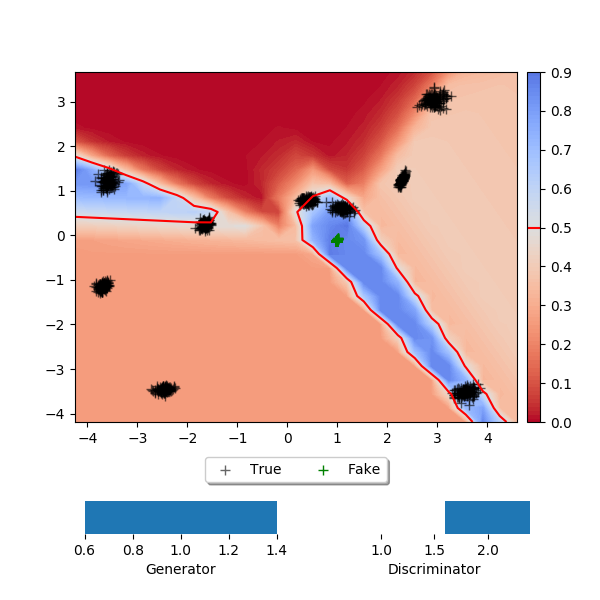}}\\

\end{tabular}
\caption{Found best response generators against a fixed GAN (top) and PNM-GANG (bottom) classifier.}
\label{fig:exploit_variance}
\end{figure*}

These results demonstrate that PNM-GANGs can provide more robust solutions than
GANs/MGANs with the same number of parameters, suggesting that they are closer to a
Nash equilibrium and provide better generative models.

\section{Interleaved training for faster convergence}
\label{app:interleaved}

In order to speed up convergence of PNM-GANG, it is possible to train best responses
of $G$ and $C$ in parallel, giving $G$ access to the intermediate results of
$f_C^{RBBR}$.  The resulting algorithm is shown in Algorithm~\ref{alg:AccelPNM}. 

In this case, the loss with which $G$ is trained depends not only on the scores
given by the current mixture of classifiers, $\mA{C}$, but also on the
classification scores given by $C$. Formally, the new proposed loss for $G$ is
a weighted sum of the losses if it were to be evaluated against $\mA{C}$ and
$C$, respectively. Clearly, the best response computation presented in the
paper corresponds to the case in which the weight of the loss coming from $C$
is zero. Intuitively, this gives the generator player the chance to be
one-step ahead of the discriminator in discovering modes that are not being
currently covered by $\mA{G}$.

\begin{algorithm}
\protect\caption{\funcName{Interleaved Training PNM for GANGs}}
\providecommand{\commentSymb}{//}
\begin{algorithmic}[1]
\small
\State{$\langle \aA{G}, \aA{C} \rangle \gets \funcName{InitialStrategies}()            $}
\State{$\langle \mA{G}, \mA{C} \rangle \gets \langle \{\aA{G}\}, \{\aA{C}\} \rangle    $} \Comment{set initial mixtures}
\While{True}
    \While{Training}
        \State{$ \aA{C} \gets \funcName{GradientStepC} ( \mA{G} ) $}
        \State{$ \aA{G} \gets \funcName{GradientStepG} ( \mA{C}, \aA{C} )  $}  
    \EndWhile
    \State \commentSymb{ Expected payoffs of these `tests' against mixture:}
    \State{$ u_{BRs} \gets \utA{G}(\aA{G},  \mA{C} ) + \utA{C}(\mA{G}, \aA{C}) $}  
    \If{$ u_{BRs} \leq 0 $}
        \State \textbf{break}
    \EndIf
        \State{$SG \gets \funcName{AugmentGame}(SG, \aA{G}, \aA{C})     $}
        \State{$\langle \mA{G}, \mA{C} \rangle \gets \funcName{SolveGame}(SG)          $}
\EndWhile
\State{\textbf{return} $\langle \mA{G}, \mA{C} \rangle $} 
\end{algorithmic}
\label{alg:AccelPNM}
\end{algorithm}

This technique may interfere with the convergence guarantees provided above.
It might not be the case that we are in fact computing a resource-bounded
best-response for $G$ anymore. However, in practice it performs very well: it 
reduces the number of needed PNM iterations, but does not affect the quality of the found solutions, as demonstrated in 
\fig\ref{fig:interleaved}.

\begin{figure*}[p]
\centering
\begin{tabular}{cc}
%
\textbf{Normal best responses} & \textbf{Interleaved training} \\
%
& \\
29 PNM iterations: & 24 PNM iterations:\\
\subfloat{\includegraphics[trim={0cm 0cm 0cm 1.85cm},clip, scale = 0.4]{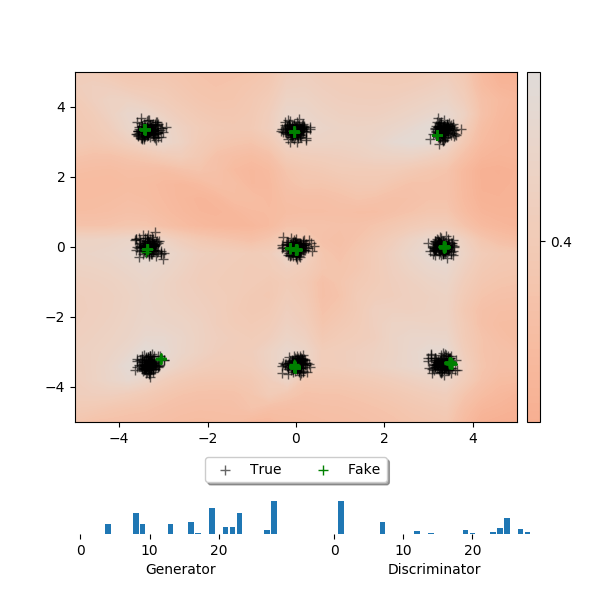}} &
\subfloat{\includegraphics[trim={0cm 0cm 0cm 1.85cm},clip, scale = 0.4]{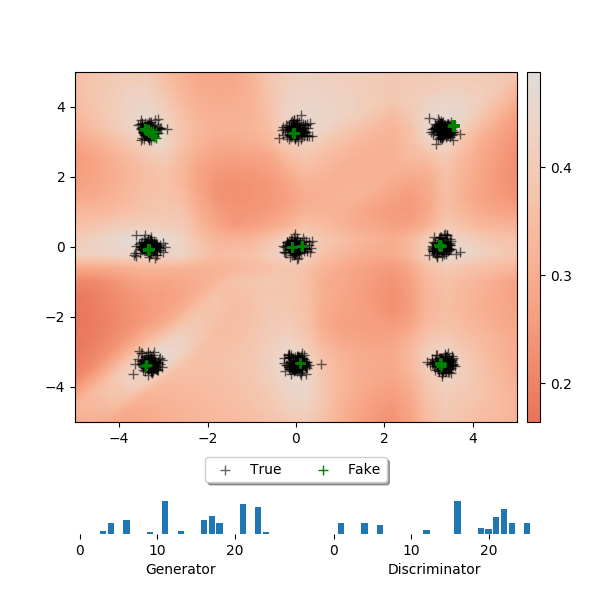}} \\
%
%
& \\
40 PNM iterations: & 25 PNM iterations:\\
\subfloat{\includegraphics[trim={0cm 0cm 0cm 1.85cm},clip, scale = 0.4]{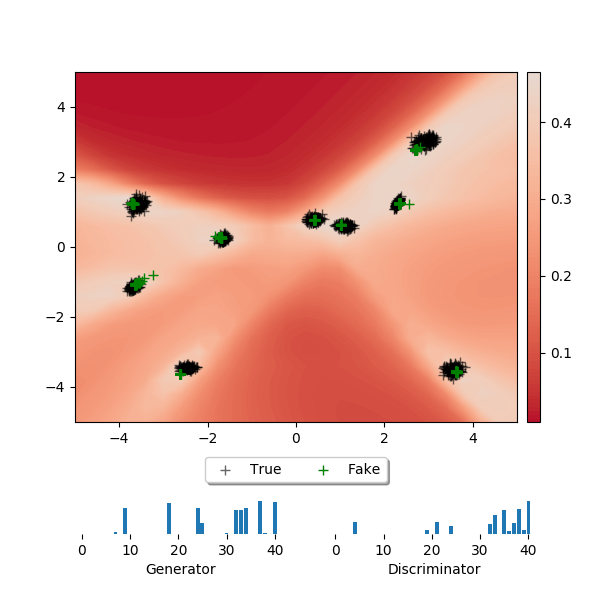}} &
\subfloat{\includegraphics[trim={0cm 0cm 0cm 1.85cm},clip, scale = 0.4]{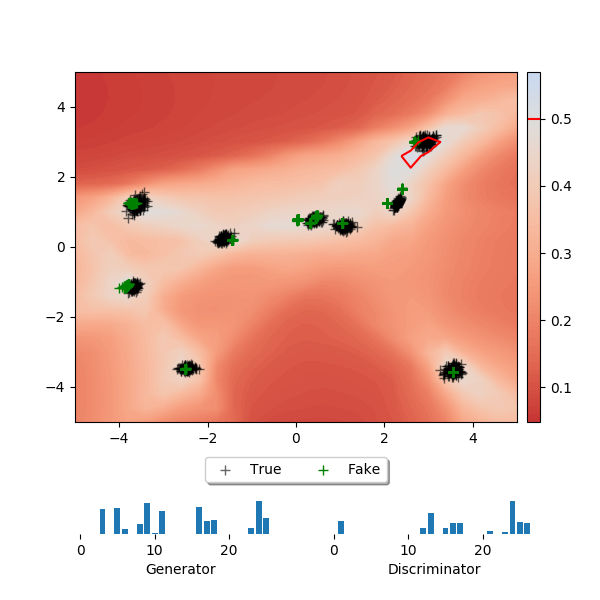}} \\
%
%
& \\
40 PNM iterations: & 18 PNM iterations:\\
\subfloat{\includegraphics[trim={0cm 0cm 0cm 1.85cm},clip, scale = 0.4]{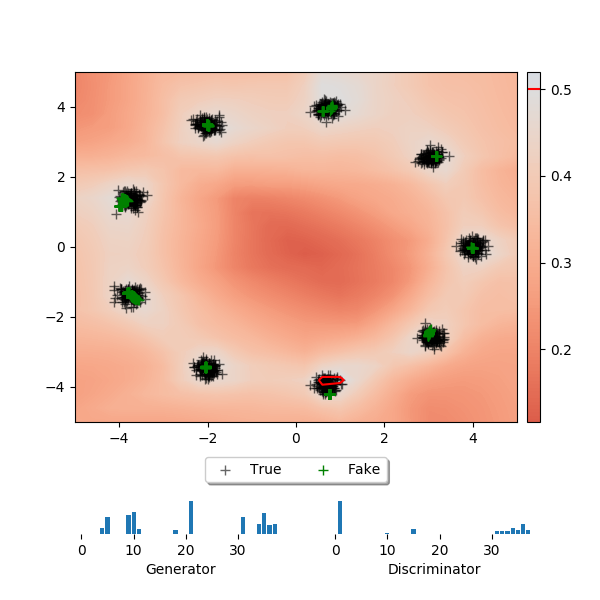}} &
\subfloat{\includegraphics[trim={0cm 0cm 0cm 1.85cm},clip, scale = 0.4]{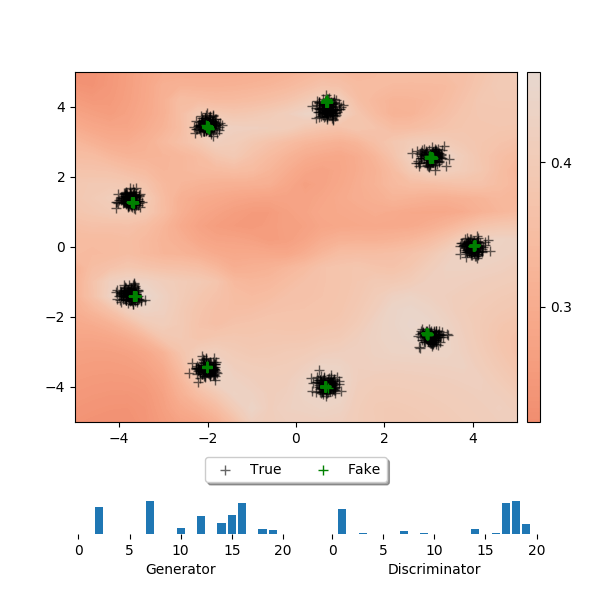}}\\
%
\end{tabular}

\caption{Interleaved training. The left column shows normal training, as used in the main paper; the 
right column shows interleaved training. In all cases, we manually picked the first iteration where
all modes are covered. This consistently took less iterations for interleaved training, while solution
quality was comparable.}
\label{fig:interleaved}
\end{figure*}

\end{document}